\pgfplotsset{every axis/.append style={
                title style={yshift=-0.2cm},
                label style={font=\footnotesize},
                ylabel style={yshift=2cm},
                tick label style={font=\scriptsize},
                legend style={font=\scriptsize}}}
\pgfplotsset{compat=1.16}
\pgfplotsset{every tick label/.append style={font=\tiny\rmfamily}}
\pgfplotsset{every axis/.append style={font=\scriptsize\rmfamily, ylabel near ticks, xlabel near ticks}}
\definecolor{grgreen}{rgb}{0.0, 0.5, 0.0}
\newcommand{\nb}[3]{{\colorbox{#2}{\bfseries\sffamily\scriptsize\textcolor{white}{#1}}}{\textcolor{#2}{\sf\small\textit{#3}}}}
\newcommand{\ab}[1]{\nb{Anas}{blue}{#1}}
\newcommand{\nl}[1]{\nb{Nicolas}{red}{#1}}
\newcommand\margincomment[4]{{\color{#3}#1}\marginpar{\vspace{-0\baselineskip}\color{#3}\raggedright\scriptsize [#4]:#2}}
\newcommand{\atmargin}[2]{\margincomment{#1}{#2}{orange}{AT}}
\newcommand{\colorppo}{orange}
\newcommand{\colortrpo}{red}
\newcommand{\colorwpo}{grgreen}
\newcommand{\colorwtrpo}{blue}
\newcommand{\colorac}{magenta}
\newcommand{\colorbgpg}{brown}
\newcommand{\colorwnpg}{yellow}
\title{
Trust Region Policy Optimization \\ with Optimal Transport Discrepancies: \\
Duality and Algorithm for Continuous Actions
}
\author{%
    Antonio Terpin\thanks{Equal contribution.} \\
    Automatic Control Laboratory \\ 
    ETH Zürich \\
    \texttt{aterpin@ethz.ch}
    \And
    Nicolas Lanzetti\textsuperscript{*}\\
    Automatic Control Laboratory \\ 
    ETH Zürich \\
    \texttt{lnicolas@ethz.ch}
    \And
    Batuhan Yardim \\
    Dept. of Computer Science \\
    ETH Zürich \\
    \hspace{-0.5cm}\texttt{alibatuhan.yardim@ethz.ch}
    \AND
    Florian D\"{o}rfler \\
    Automatic Control Laboratory \\
    ETH Zürich \\
    \texttt{dorfler@ethz.ch}
    \And
    Giorgia Ramponi \\
    Dept. of Computer Science\\
    ETH AI Center \\
    \texttt{giorgia.ramponi@ai.ethz.ch}
}
\begin{document}

\maketitle

\begin{abstract}
Policy Optimization (PO) algorithms have been proven particularly suited to handle the high-dimensionality of real-world continuous control tasks. In this context, Trust Region Policy Optimization methods represent a popular approach to stabilize the policy updates. These usually rely on the Kullback-Leibler (KL) divergence to limit the change in the policy. The Wasserstein distance represents a natural alternative, in place of the KL divergence, to define trust regions or to regularize the objective function. However, state-of-the-art works either resort to its approximations or do not provide an algorithm for continuous state-action spaces, reducing the applicability of the method.
In this paper, we explore optimal transport discrepancies (which include the Wasserstein distance) to define trust regions, and we propose a novel algorithm -- Optimal Transport Trust Region Policy Optimization (OT-TRPO) -- for continuous state-action spaces. We circumvent the infinite-dimensional optimization problem for PO by providing a one-dimensional dual reformulation for which strong duality holds.
We then analytically derive the optimal policy update given the solution of the dual problem. This way, we bypass the computation of optimal transport costs and of optimal transport maps, which we implicitly characterize by solving the dual formulation.
Finally, we provide an experimental evaluation of our approach across various control tasks. Our results show that optimal transport discrepancies can offer an advantage over state-of-the-art approaches.
\end{abstract}

\section{Introduction}
\gls*{acr:rl} has achieved outstanding results in numerous fields, from resource management~\citep{mao2016resourcemanagement}, recommendation systems~\citep{Guanjie2018recommendation}, and optimization of chemical reactions~\citep{zhou2017chemical}, to video-games~\citep{mnih2013atari, yue2019videogames, Kaiser2020atari} and board games~\citep{xenou2019boardgames}, without sparing the world's champion of GO \citep{silver2016go}. Many of these successful applications rely on \gls*{acr:po} algorithms, a family of \gls*{acr:rl} methods that are particularly suited to handle the high-dimensionality of real-world control tasks.
\gls*{acr:po} algorithms approach the \gls*{acr:rl} setting as an optimization problem in the policy space. In this context, the main challenge is to provide policy improvement guarantees. One remarkable option in this direction is represented by~\gls*{acr:trpo}~\cite{Schulman2015}, which constrains the optimization problem to policies that are ``close'' to the current one, whereby the \gls*{acr:kl} divergence is used as a similarity measure. Nevertheless, ``closeness'' in the policy space can also be quantified via other functions.
Recent work~\cite{Pacchiano2020, Song2022, Moskovitz2020} proposed to replace the \gls*{acr:kl} divergence with the Wasserstein distance, a particular instance of optimal transport discrepancy (or cost). 
Besides being very natural and expressive, optimal transport discrepancies enjoy powerful topological, differential, geometrical, computational, and statistical features and guarantees~\cite{Villani2008,Aolaritei2022,lanzetti2022first}.
In particular, (i) optimal transport discrepancies allow us to compare probability measures (and thus policies) not sharing the same support (for which the \gls*{acr:kl} divergence is infinity); and (ii) they encapsulate the geometry encoded by the transport cost in the action space: the discrepancy between two actions coincides with the discrepancy between the corresponding deterministic policies (whereas the \gls*{acr:kl} divergence is again infinity). These reasons make optimal transport discrepancies particularly attractive for~\gls*{acr:rl}. However, the mere evaluation of optimal transport discrepancies entails solving a transportation problem (e.g., see~\cite{Peyre2019,taskesen2021semi}), which poses significant computational challenges for its deployment.
Most of the previous work on the topic~\cite{Pacchiano2020, Moskovitz2020} overcomes the computational burden via approximation, effectively changing the original problem.
Conversely, \cite{Song2022} proposes two algorithms to solve the \gls*{acr:po} problem exactly, studying the trust regions described via the Wasserstein distance and the Sinkhorn divergence. However, their analysis is limited to discrete (and finite) settings.
In our work, we consider optimal transport discrepancies to construct the trust region in settings where actions and states take value in general compact Polish spaces. This allows tackling many applications of interest involving continuous domains such as physical control tasks. We derive and leverage a dual reformulation of the \gls*{acr:po} problem to ensure an optimal policy update within the trust region, without any additional need for line searches (conversely to~\cite{Schulman2015}). We circumvent the computation of optimal transport discrepancies via an analytical expression of the transport maps, which are characterized thanks to the dual reformulation.
Notably, our analysis enables a practical and efficient algorithm that encompasses both discrete and continuous settings.

\textbf{Contributions.} Our contributions are summarized as follows:
\begin{enumerate}[leftmargin = *]
\item We derive the dual of the optimal transport trust region policy optimization problem and we show that strong duality holds for general compact metric state-action spaces. We further characterize the optimal policy update given the solution to the dual problem. We show that policy updates can result in monotonic improvement of the performance function.

\item We propose a novel \gls*{acr:po} algorithm for continuous spaces, \gls*{acr:ottrpo}. Herein, we leverage the derived duality theory to provide policy updates that satisfy the optimal transport discrepancy constraint while circumventing its computation.

\item We conduct experiments in several \gls*{acr:rl} benchmarks in both discrete and continuous state-action spaces, comparing our method to state-of-the-art approaches. Our results show the effectiveness of our approach for~\gls*{acr:po} and the benefits of using optimal transport discrepancies. 
\end{enumerate}

\section{Related Works}
Optimal transport, and in particular the Wasserstein distance, has found various applications in \gls*{acr:rl} and in particular in ~\gls*{acr:po} algorithms. In this section, we discuss the most relevant for our work; a broader overview is postponed to~\cref{app:related-work}.
In~\cite{Pacchiano2020}, the authors propose \gls*{acr:bgpg}, whereby they replace the \gls*{acr:kl} divergence trust region from \gls*{acr:trpo}~\cite{Schulman2015} by a Wasserstein distance penalty in a behavioral space.
Although our alternating procedure in~\cref{sec:practical-ottrpo} may resemble the approach of~\cite{Pacchiano2020} in spirit, our approach is fundamentally different; cfr. \cite[Algorithms 1 and 3]{Pacchiano2020} with \cref{algo:policy-update}.  
In \cite{Moskovitz2020}, the authors further suggested incorporating additional information about the local behavior of policies encapsulated in the so-called Wasserstein Information Matrix, in the attempt to speed up the~\gls*{acr:po} using a \gls*{acr:wnpg}. However, these approaches are relatively slow, compared to the traditional \gls*{acr:ppo} and \gls*{acr:trpo}. Conversely to our work, they do not build on the idea of trust regions: we instead guarantee that the policy update is ``close'' to the previous one, where the ``closeness'' is defined via an optimal transport discrepancy (e.g., the Wasserstein distance).
Accordingly, the closest related work to ours is the recent paper~\cite{Song2022} which studied~\gls*{acr:wpo} for discrete action spaces. In contrast to~\cite{Song2022}, the present work addresses the general setting of compact Polish spaces encompassing the cases of continuous and discrete state-action spaces as particular cases. While we also adopt a duality approach, our level of generality induces many challenges compared to the discrete action space setting (see Remark~\ref{rem:comparison-song-1}), and it is compatible even with non-direct policy parametrizations.
Finally, our work is closely connected with Wasserstein \gls*{acr:dro}~\cite{Gao2016,Mohajerin2018,blanchet-murthy2019,rahimian2019distributionally,kuhn-et-al2019survey}. Albeit our duality results are inspired from this literature, \gls*{acr:dro} is concerned with quantifying the worst-case risk of a cost functional over an ambiguity set of probability measures, which is a fundamentally different setting; see~\cref{rem:comparison-dro}. Conversely to all the previous work, we show how to perform exact optimal transport-based \gls*{acr:trpo} in continuous settings: we exploit optimal transport theory to circumvent the computational burden of evaluating optimal transport discrepancies while still performing exact policy updates within the trust regions. Accordingly, to the best of our knowledge, our practical algorithm is completely novel.

\section{Preliminaries}
\label{sec:preliminaries}

We briefly introduce useful background and notation for the remainder of the paper. 

\noindent\textbf{Notation.} 
For every Polish space~$\mathcal{X}$ (i.e., completely metrizable separable topological space), the set of Borel probability measures on~$\mathcal{X}$ is denoted by $\probabilityspace{\mathcal{X}}$. The Dirac measure at some point~$x \in \mathcal{X}$ is denoted by~$\delta_x$.  Given two Polish spaces $\mathcal{X},\mathcal{Y}$, a Borel probability measures $\mu \in \probabilityspace{\mathcal{X}}$, and a Borel map $T: \mathcal{X} \to \mathcal{Y}$, the pushforward measure of $\mu$, denoted by $\pushforward{T}{\mu}$, is defined by~$(\pushforward{T}{\mu})(A)\coloneqq\mu(T^{-1}(A))$ for all $A \in \borel{\mathcal{Y}}$, where $\borel{\mathcal{Y}}$ is the collection of Borel subsets of $\mathcal{Y}$.
The set of probability measures on a finite set~$\mathcal{X}$ coincides with the probability simplex and will also be denoted by~$\probabilityspace{\mathcal{X}}$. For a given function~$f: \mathcal{X} \to \bR$, the notation $\|f\|_{\infty}$ refers to~$\sup_{x \in \mathcal{X}} |f(x)|$.

\noindent\textbf{Markov Decision Process.}
We consider an infinite-horizon discounted~\gls*{acr:mdp}~\cite{puterman2005} $\markov = (\statespace, \actionspace, \transitionprobability, \reward, \rho, \gamma)$, where $\statespace$ is the state space, $\actionspace$ is the action space, $\transitionprobability: \statespace\times\actionspace\times\statespace \to \nonnegativeReals$ is the state transition probability kernel, $\reward:\statespace\times\actionspace\to\reals$ is the reward function, $\rho$ is the initial state probability distribution, and $\gamma \in [0,1)$ is the discount factor. 
A randomized stationary Markovian policy, which we will simply call a policy in the rest of the paper, is a mapping $\pi: \statespace \to \mathcal{P}(\actionspace)$ specifying for each~$s \in \statespace$ a probability measure over the set of actions $\actionspace$ by~$\pi(\cdot|s) \in \probabilityspace{\actionspace}$.
The set of all policies is denoted by $\policies$.
Each policy $\pi \in \Pi$ induces a discrete-time Markov reward process $\{(s_t, r(s_t,a_t))\}_{t \in \mathbb{N}}$, where $s_t \in \statespace$ represents the state of the system at time~$t$ and~$r(s_t,a_t)$ corresponds to the reward received when executing action $a_t \in \actionspace$ in state~$s_t$.
We denote by~$\bP_{\rho,\pi}$ the probability distribution of the Markov chain~$(s_t,a_t)$ issued from the \gls*{acr:mdp} controlled by the policy~$\pi$ with initial state distribution~$\rho$. The associated expectation is denoted by~$\bE_{\rho,\pi}$ and the notation~$\bE_{\pi}$ is used whenever there is no dependence on~$\rho$. The state-value function $V^\pi : \statespace \to \bR$ and the action-value function $Q^\pi : \statespace \times \actionspace \to \bR$ are defined for all $s\in \statespace, a \in \actionspace$ by
$
V^\pi(s) \eqdef \bE_\pi [ \sum_{t=0}^\infty \gamma^t r(s_t,a_t)| s_0 = s]
$
and
$
Q^\pi(s,a) \eqdef \bE_\pi [ \sum_{t=0}^\infty \gamma^t r(s_t,a_t) | s_0 = s, a_0= a ].
$
We also define the advantage function $\advantage : \statespace \times \actionspace \to \bR$ by $\advantage(s,a) \eqdef Q^\pi(s,a) - V^\pi(s)$.
Given an initial state probability distribution $\rho$, our goal is to find a policy $\pi$
maximizing the expected long-term return
\begin{equation*}
J(\pi) \coloneqq \bE_{\rho,\pi} \left[\sum_{t = 0}^\infty \gamma^t r(s_t, a_t)\right],
\end{equation*}
which is well-defined when, e.g., the reward function is bounded.
To solve this~\gls*{acr:po} problem, we only have access to the observed state, action, and reward ~$s_t, a_t, r_t$ at each time step~$t$, whereas the state transition kernel $\transitionprobability$ is unknown.
When the state and action spaces~($\statespace$ and~$\actionspace$) are finite, an optimal policy~$\pi^*$ is guaranteed to exist. When $\statespace$ and~$\actionspace$ are continuous, a (measurable) optimal policy is also guaranteed to exist (see \cite[Theorem~6.11.11, p. 262]{puterman2005}) under appropriate assumptions on the state and action spaces, the reward function and the transition kernel; we will explicit these later on. 
In this paper, we focus on the continuous state-action space setting and comment on the discrete (non necessarily finite) setting as a special case.

\noindent\textbf{Optimal transport.} Consider a Polish space~$\mathcal{X}$ and a continuous non-negative function $c:\mathcal X\times \mathcal X\to\mathbb R_{\geq 0}$, referred to as \emph{transport cost}.
Let~$\mu, \nu \in \probabilityspace{\mathcal{X}}$ and define the set of joint probability measures on $\mathcal{X} \times \mathcal{X}$ with marginals~$\mu$ and~$\nu$:
\begin{equation*}
\Gamma(\mu, \nu) \eqdef \{ \gamma \in \probabilityspace{\mathcal{X} \times \mathcal{X}}: \gamma(A \times \mathcal{X}) = \mu(A), \gamma(\mathcal{X} \times B) = \nu(B)\: \forall\, A,B\in\borel{\mathcal{X}}\}.
\end{equation*}
We define the \emph{optimal transport discrepancy} on~$\probabilityspace{\mathcal{X}}$ for every probability measures~$\mu$ and~$\nu$ by
\begin{equation}\label{eq:otdiscrepancy}
\otdiscrepancy{\mu}{\nu}
\coloneqq
\min_{\gamma \in \Gamma(\mu,\nu)}\int_{\mathcal{X}\times \mathcal{X}}c(x,x')\, \d\gamma(x,x').
\end{equation}
Notice that this definition is valid for both discrete and continuous measures. When $c=d^p$, where~$d$ is a distance on~$\mathcal X$ and $p\geq 1$, then $\otdiscrepancy{\mu}{\nu}^{1/p}$ reduces to the celebrated type-$p$ Wasserstein distance~\cite{Villani2008,Ambrosio2008}. 
In our~\gls*{acr:po} context, we will use this discrepancy to compare two probability measures~$\pi(\cdot|s) \in \probabilityspace{\actionspace}$ and~$\tilde{\pi}(\cdot|s) \in \probabilityspace{\actionspace}$ for every $s \in \statespace$, where~$\pi, \tilde{\pi} \in \Pi$ are two policies. 

\section{Optimal Transport for Trust Region Policy Optimization}
\label{sec:ot-trpo}
In this section, we study the \gls*{acr:trpo} algorithm with a trust region defined using an optimal transport discrepancy as a measure of closeness between policies. We prove that the arising optimization problem admits an amenable dual reformulation. Importantly, we show that, given the dual optimal solution, the primal solution has an analytical expression, which can lead to monotonic improvements of the performance index.

\subsection{Policy iteration algorithm with optimal transport-based trust regions}
By the policy difference lemma~\cite[Lemma 6.1]{Kakade2002b}, the difference between the expected returns of two policies $\pi, \tilde{\pi} \in \Pi$ reads
\begin{equation}
\label{eq:perf-diff-lemma}
    J(\tilde{\pi})
    =
    J(\pi) + \int_{\statespace} \int_{\actionspace}\advantage(s,a) \d\tilde{\pi}(a|s) \d\rho_{\tilde{\pi}}(s),
\end{equation}
where~$\rho_{\tilde{\pi}}$ is the discounted state-occupancy measure~\cite{Kakade2002b}.
The complex dependency of the discounted visitation frequency $\rho_{\tilde{\pi}}$ on the policy $\tilde{\pi}$ hampers the direct optimization of~\eqref{eq:perf-diff-lemma}; see~\cite[Sec.~2]{Schulman2015}. Following previous work, we consider instead a local approximation of the expected return~$J$, defined by
\begin{equation}
\label{eq:surrogate}
    L_{\pi}(\tilde{\pi})
    \eqdef
    J(\pi) + \int_{\statespace} \int_{\actionspace} \advantage (s,a) \d\tilde{\pi}(a|s)\d\visitationfreq(s).
\end{equation}
Observe that this approximation uses the discounted state-occupancy measure~$\rho_{\pi}$ (which can be estimated) instead of~$\rho_{\tilde{\pi}}$ (see~\eqref{eq:perf-diff-lemma}). In other words, the influence of a policy change on the discounted state-occupancy measure is neglected.
Moreover, this surrogate function coincides with the expected return~$J$ when $\tilde{\pi} = \pi$. Then,~\eqref{eq:surrogate} motivates a policy update rule maximizing at each time step the approximation $L_{\pi}(\tilde{\pi})$ over $\tilde{\pi}$, where~$\pi$ is the current policy that we want to improve upon (see also~\cite[Section 2, Eq.~(1)]{Song2022}).
To ensure stability of the update, we conservatively update the policy using a discrepancy constraint between the current and the new one.
Unlike~\gls*{acr:trpo}, we do not use the~\gls*{acr:kl} divergence to define the trust region, but instead an optimal transport discrepancy. Then, at each time step, our method solves
\begin{equation}
\begin{aligned}
\label{pb:trcp}    
    \sup_{\tilde{\pi} \in \Pi} &\int_{\statespace} \int_{\actionspace}  \advantage(s,a)
    \d\tilde{\pi}(a|s)\d\visitationfreq(s),
    \\
    \text{s.t. } &\tilde\pi\in\mathcal{T}_{\varepsilon}(\pi) \eqdef \left\{ \tilde{\pi} \in \Pi:
    \int_{\statespace} \otdiscrepancy{\pi(\cdot|s)}{\tilde\pi(\cdot|s)}\d\visitationfreq(s) \leq \varepsilon \right\},\\
\end{aligned}\tag{P}
\end{equation}
where $\varepsilon >0$ is a parameter defining the radius of the trust region~$\mathcal{T}_{\varepsilon}(\pi)$.
Similarly to \cite[Eq. (12)]{Schulman2015} and \cite[Problem~(4)]{Song2022}, we consider the \emph{average} optimal transport discrepancy over the state space as optimization constraint.
Accordingly, the \gls*{acr:ottrpo} policy optimization results from iteratively solving Problem~\eqref{pb:trcp}.

\subsection{Dual of the trust-region constrained problem~(\ref{pb:trcp})}
Problem~(\ref{pb:trcp}) is intractable for two main reasons.
First, as soon as the state or action space is continuous, it is an infinite-dimensional optimization problem.
Second, the mere evaluation of the trust-region constraint (e.g., for line search as in~\gls*{acr:trpo}~\cite{Schulman2015}) needs (possibly) infinitely many computations of the optimal transport discrepancy, which is itself already challenging to estimate.
However, inspired by prior works on Wasserstein \gls*{acr:dro}~\cite{Gao2016,Mohajerin2018,zhao-guan2018,blanchet-murthy2019}, we show that problem~\eqref{pb:trcp} admits a tractable one-dimensional convex dual reformulation. This duality theorem is the cornerstone of the design of our algorithm. Before stating the result, we make the following assumptions.
\begin{assumption}
\label{hyp:state-action-space}
The state space~$\statespace$ is a compact subset of an Euclidean space, the action space~$\actionspace$ is a compact subset of a Polish space, the reward function~$r$ is a continuous function and for every  continuous function~$w$ on~$\statespace$, $\int_{\statespace} w(u) \d\transitionprobability(u|s,a)$ is continuous in both~$s$ and~$a$.
\end{assumption}
Under this assumption, there exists an optimal measurable (stationary) policy to the~\gls*{acr:po} problem formulated in~\cref{sec:preliminaries}. We refer the reader to~\cite[Theorem~6.11.11, p. 262]{puterman2005} for a statement of this result and milder assumptions. In particular, our duality result continues to hold if $\statespace$ is a compact Polish space (i.e., not necessarily Euclidean). 
\begin{assumption}
\label{hyp:continuous-advantage}
For every policy~$\pi \in \Pi$, the advantage function $\advantage: \statespace \times \actionspace \to \reals$ is continuous. Moreover, the transport cost $c:\actionspace\times\actionspace\to\nonnegativeReals$ is continuous and satisfies $c(a,a)=0$ for all $a\in\actionspace$.
\end{assumption}
In the next theorem, we show that under these assumptions Problem~\eqref{pb:trcp} admits a dual reformulation for which strong duality holds.
\begin{theorem}[Dual formulation]
\label{thm:dual}
For every~$\varepsilon >0$ and for every policy~$\pi \in \Pi$, under~\cref{hyp:state-action-space,hyp:continuous-advantage} the following strong duality result holds: 
\begin{align}
    \max_{\tilde{\pi} \in \Pi} &\left\{\int_{\statespace} \int_{\actionspace}\advantage(s,a) \, \d\tilde{\pi}(a|s)\d\visitationfreq(s):
    \int_{\statespace} \otdiscrepancy{\pi(\cdot|s)}{\tilde\pi(\cdot|s)} \d\visitationfreq(s) \leq \varepsilon \right\}\, \label{pb:primal}\tag{P}\\
     &=
     \min_{\lambda \geq 0}\left\{ \lambda\varepsilon + \int_{\statespace} \int_{\actionspace} \max_{a' \in \actionspace}\{\advantage(s,a') - \lambda c(a,a')\}\d\pi(a|s) \d\visitationfreq(s) \right\}. \label{pb:dual}\tag{D}
\end{align}
Moreover, the primal and dual problems~\eqref{pb:trcp} and~\eqref{pb:dual} admit a maximizer and a minimizer, respectively. 
\end{theorem}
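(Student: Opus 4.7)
The plan is to reformulate~\eqref{pb:primal} as an optimization over transport couplings, apply Lagrangian duality, and resolve the inner supremum in closed form via measurable selection. Using the definition of $\otdiscrepancy{\cdot}{\cdot}$ as an infimum over couplings, I would first show that~\eqref{pb:primal} is equivalent to
\begin{equation*}
\sup_{\gamma}\Bigl\{\int_{\statespace}\!\!\int_{\actionspace\times\actionspace}\!\!\advantage(s,a')\,\d\gamma_{s}(a,a')\,\d\visitationfreq(s):\int_{\statespace}\!\!\int c(a,a')\,\d\gamma_{s}\,\d\visitationfreq\leq\varepsilon,\ \gamma_{s}\text{ has first marginal }\pi(\cdot|s)\Bigr\}.
\end{equation*}
Indeed, any feasible $\tilde\pi$ lifts to an optimal coupling $\gamma_{s}\in\Gamma(\pi(\cdot|s),\tilde\pi(\cdot|s))$ (existence by~\cite[Theorem 4.1]{Villani2008}), while conversely the second marginal of any admissible $\gamma$ defines a feasible $\tilde\pi$ because $\otdiscrepancy{\cdot}{\cdot}$ is an infimum; since the objective depends only on the second marginal, the two formulations coincide.

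Introducing a multiplier $\lambda\geq 0$ for the coupling-cost constraint then yields, by weak duality, an upper bound on~\eqref{pb:primal} equal to $\inf_{\lambda\geq 0}\bigl\{\lambda\varepsilon+\sup_{\gamma}\int\!\int[\advantage(s,a')-\lambda c(a,a')]\,\d\gamma_{s}\,\d\visitationfreq\bigr\}$. For fixed $\lambda$, the inner supremum over couplings with prescribed first marginal $\pi(\cdot|s)$ is attained by placing unit mass on any $a^{\star}(s,a)\in\arg\max_{a'\in\actionspace}\{\advantage(s,a')-\lambda c(a,a')\}$. Continuity of $\advantage$ and $c$ (\cref{hyp:continuous-advantage}) together with compactness of $\actionspace$ (\cref{hyp:state-action-space}) ensure, via Berge's maximum theorem, that the maximum value is continuous in $(s,a)$ and that the argmax correspondence is nonempty, compact-valued, and upper hemicontinuous; a jointly measurable selector $a^{\star}$ then exists by the Kuratowski--Ryll-Nardzewski theorem, and substitution recovers exactly the objective of~\eqref{pb:dual}.

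For strong duality I would verify a Slater-type condition: since $\varepsilon>0$ and the diagonal coupling is strictly feasible with zero cost (by $c(a,a)=0$), classical convex duality in the space of bounded Borel measures---equipped with the weak-$\ast$ topology as the predual of continuous functions on $\actionspace\times\actionspace$---closes the gap; alternatively one can invoke Sion's minimax theorem after restricting $\lambda$ to an a priori compact sublevel set. Existence of a dual minimizer then follows from convexity, continuity (dominated convergence), and coercivity of the dual objective, since $\lambda\varepsilon\to\infty$ while the integrand remains sandwiched between $\advantage(s,a)$ (choosing $a'=a$) and $\|\advantage\|_{\infty}$. Existence of a primal maximizer follows by the direct method: Prokhorov tightness gives weak compactness of the feasible policy set, weak continuity of the linear objective follows from continuity of $\advantage$, and the trust region is closed by lower semicontinuity of $\otdiscrepancy{\cdot}{\cdot}$ in the weak topology. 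The main obstacle will be the rigorous execution of strong duality in this infinite-dimensional, measure-theoretic setting---specifically, ensuring that the coupling reformulation is faithful (with measurability of the second marginal as a policy kernel) and that the Slater-type interior condition genuinely certifies zero duality gap, since the usual finite-dimensional arguments must be upgraded to accommodate the infinite-dimensional space of measure-valued policies.
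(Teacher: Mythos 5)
Your overall architecture is sound and genuinely different from the paper's. The coupling reformulation plus Lagrangian relaxation reproduces the paper's weak duality (\cref{proposition:weakduality}) by a cleaner route: the paper instead swaps $\sup$ and $\inf$ and then feeds explicit Kantorovich potentials $\psi=\advantage(s,\cdot)/\lambda$, $\phi=\inf_{a'}\{c(\cdot,a')-\psi(a')\}$ into Kantorovich duality, which amounts to the same bound. Your resolution of the inner supremum via Berge plus a measurable selector matches \cref{lemma:regularizationoperator:properties}.4. The real divergence is in how strong duality is closed. The paper never invokes Slater, Sion, or any abstract infinite-dimensional duality: it shows the one-dimensional dual $h(\lambda)$ is convex, lsc, and coercive (hence attains a minimizer $\lambda^\ast$), writes the first-order conditions at $\lambda^\ast$ in terms of left/right derivatives, reads off the sandwich $\int c(a,\optmaplower{\lambda^\ast}{s}{a})\d\pi\d\visitationfreq\leq\varepsilon\leq\int c(a,\optmapupper{\lambda^\ast}{s}{a})\d\pi\d\visitationfreq$, and then \emph{constructs} a feasible primal policy by mass-splitting between the closest and furthest maximizers, verifying directly that it attains the dual value. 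This simultaneously proves the zero gap, primal attainment, and the closed-form update of \cref{cor:optimal-policy} --- none of which your route delivers for free.

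The gap in your proposal is that the decisive step is named but not executed, and both candidate routes hide exactly the difficulties that \cref{rem:comparison-dro} identifies as the crux. First, a scalar Slater condition does not by itself certify zero gap when the decision variable lives in an infinite-dimensional space of measure-valued kernels; you would need, e.g., a perturbation-function argument (concavity and finiteness of $\epsilon\mapsto v(\epsilon)$ near $\varepsilon$) or Sion on a genuinely compact convex set, and in either case you must first make the coupling reformulation faithful: lifting a feasible $\tilde\pi$ to optimal couplings requires a \emph{measurable} selection $s\mapsto\gamma_s$ of optimal plans (not just pointwise existence from \cite[Theorem~4.1]{Villani2008}), and going back requires disintegrating the optimal joint measure into a Borel kernel $s\mapsto\tilde\pi(\cdot|s)$. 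Second, your primal-attainment argument asserts lower semicontinuity of $\tilde\pi\mapsto\int_\statespace\otdiscrepancy{\pi(\cdot|s)}{\tilde\pi(\cdot|s)}\d\visitationfreq(s)$ under weak convergence of the joint measures; this is an averaged, state-indexed OT cost, not a single OT cost, and its lsc is precisely the kind of regularity-in-$s$ statement that does not follow from off-the-shelf DRO or optimal-transport results. The paper sidesteps both issues entirely by exhibiting the maximizer explicitly. To make your route rigorous you would need to supply the measurable selection of optimal couplings, the disintegration step, and a complete proof of the zero gap (the perturbation-function argument for the single scalar constraint is probably the cleanest fix), at which point you would still lack the explicit optimal policy that the paper's constructive proof provides and that its algorithm relies on.
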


Remarkably, Problem~\eqref{pb:dual} is one-dimensional and convex, and it only involves the current policy $\pi$, advantage function $\advantage$, and visitation frequency $\visitationfreq$. The proof of~\cref{thm:dual} is constructive. In particular, we derive a closed-form solution of problem~(\ref{pb:trcp}) as a function of the optimal Lagrange multiplier $\lambda^*$ solving problem~(\ref{pb:dual}) and the policy~$\pi$ defining the problem. Even if the closed-form policy is part of~\cref{thm:dual} and its proof, we present it separately for clarity and for later reference. To do so, we introduce some additional notation, which is instrumental to derive a practical algorithm (similarly to~\cite{Song2022}). 

Under~\cref{hyp:state-action-space,hyp:continuous-advantage}, define for every~$\lambda \geq 0$ the $\lambda$-regularized advantage~$\regularizer_{\lambda}: \statespace \times \actionspace \to \bR$ and its associated set of maximizers for every $s \in \statespace, a \in \actionspace$ as follows: 
\begin{equation}
\begin{aligned}
\label{eq:def-Phi-D}
    \regularizer_{\lambda}(s,a) &\coloneqq
    \max_{a' \in \actionspace} \{\advantage(s,a') - \lambda c(a,a')\},
    \\
    \mins{\lambda}{s}{a} &\coloneqq
    \argmax_{a' \in \actionspace} \{\advantage(s,a') - \lambda c(a,a')\}. 
\end{aligned}
\end{equation}

\begin{corollary}[Optimal policy]
\label{cor:optimal-policy}
Under the setting and assumptions of~\cref{thm:dual}, for any policy~$\pi \in \Pi$, let~$\lambda^*\geq 0$ be the minimizer of the dual problem~\eqref{pb:dual}. Then, the following statements hold:
\begin{enumerate}[leftmargin=*,noitemsep,topsep=0pt]
    \item For every $\lambda \geq 0$, there exist two measurable selection maps~$\optmaplowername{\lambda}: \statespace \times \actionspace \to \actionspace$ and~$\optmapuppername{\lambda}:\statespace \times \actionspace \to \actionspace$ such that for every~$s \in \statespace, a \in \actionspace$
    \begin{equation}
    \optmaplower{\lambda}{s}{a} \in \argmin_{a' \in \mins{\lambda}{s}{a}}c(a,a'),
    \quad 
    \optmapupper{\lambda}{s}{a} \in \argmax_{a' \in \mins{\lambda}{s}{a}}c(a,a').
    \end{equation}
    
    \item If $\lambda^\ast > 0$, there exists~$t^* \in [0,1]$ such that
    \begin{equation}\label{eq:t-star}
    t^\ast\int_\statespace\int_\actionspace c(a,\optmaplower{\lambda^*}{s}{a}) \d\pi(a|s) \d\rho_{\pi}(s)
    +
    (1-t^\ast) \int_\statespace\int_\actionspace c(a,\optmapupper{\lambda^*}{s}{a}) \d\pi(a|s) \d\rho_{\pi}(s)
    = \varepsilon.
    \end{equation}

    \item There exists an optimal feasible policy~$\tilde{\pi}$ for problem~\eqref{pb:primal} defined for every~$s \in \statespace$ by
    \begin{equation}
    \label{eq:optimal-policy-update}
        \tilde{\pi}(\cdot|s)
        \eqdef
        t^* \optmaplower{\lambda^*}{s}{\cdot}_\# \pi(\cdot|s)
        + (1-t^*) \optmapupper{\lambda^*}{s}{\cdot}_\# \pi(\cdot|s),
    \end{equation}
    where $t^\ast$ results from~\eqref{eq:t-star} if $\lambda^\ast>0$ and $t^\ast=0$ if $\lambda^\ast = 0$.
\end{enumerate}
\end{corollary}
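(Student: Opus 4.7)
The plan is to establish the three items sequentially, invoking Theorem~\ref{thm:dual} only at the very end to certify optimality of the constructed policy.

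For item~(1), I would appeal to Berge's maximum theorem: under Assumptions~\ref{hyp:state-action-space} and~\ref{hyp:continuous-advantage}, the map $a' \mapsto \advantage(s,a') - \lambda c(a,a')$ is continuous on the compact action space~$\actionspace$, so the correspondence $M_\lambda$ is nonempty, compact-valued, and upper hemicontinuous in $(s,a)$. Consequently $c(a,\cdot)$ attains both its minimum and its maximum on the compact set $M_\lambda(s,a)$, so the sets $\argmin_{a' \in M_\lambda(s,a)} c(a,a')$ and $\argmax_{a' \in M_\lambda(s,a)} c(a,a')$ are nonempty. Jointly measurable selectors $T^-_\lambda, T^+_\lambda : \statespace \times \actionspace \to \actionspace$ then follow from a measurable selection result (Kuratowski--Ryll-Nardzewski, or the measurable maximum theorem of Aliprantis--Border) applied to the closed-valued, graph-measurable correspondences obtained above.

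For item~(2), I would study the convex, univariate dual objective $g(\lambda) \coloneqq \lambda \varepsilon + \int_{\statespace} \int_{\actionspace} \Phi_\lambda(s,a) \, \d\pi(a|s) \, \d\rho_{\pi}(s)$. By Danskin's envelope theorem, for each $(s,a)$ the subdifferential satisfies $\partial_\lambda \Phi_\lambda(s,a) = \mathrm{conv}\{-c(a,a^\star) : a^\star \in M_\lambda(s,a)\} = [-c(a, T^+_\lambda(s,a)), -c(a, T^-_\lambda(s,a))]$, using continuity of $c$ and compactness of $M_\lambda(s,a)$. Interchanging subdifferential and integral (valid for convex integral functionals under our continuity assumptions, via the measurable selections from item~(1)) gives $\partial g(\lambda^\ast) = \bigl[\, \varepsilon - \int_{\statespace}\!\int_{\actionspace} c(a, T^+_{\lambda^\ast}) \, \d\pi \, \d\rho_{\pi}, \; \varepsilon - \int_{\statespace}\!\int_{\actionspace} c(a, T^-_{\lambda^\ast}) \, \d\pi \, \d\rho_{\pi} \,\bigr]$. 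First-order optimality of $\lambda^\ast > 0$ forces $0 \in \partial g(\lambda^\ast)$, i.e., $\int_{\statespace}\!\int_{\actionspace} c(a, T^-_{\lambda^\ast}) \, \d\pi \, \d\rho_{\pi} \leq \varepsilon \leq \int_{\statespace}\!\int_{\actionspace} c(a, T^+_{\lambda^\ast}) \, \d\pi \, \d\rho_{\pi}$; since the left-hand side of~\eqref{eq:t-star} is affine and continuous in $t \in [0,1]$, the intermediate value theorem yields the required $t^\ast$.

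For item~(3), feasibility of the policy~\eqref{eq:optimal-policy-update} follows by exhibiting, for each $s$, the explicit transport plan $\gamma_s \coloneqq t^\ast (\mathrm{id}, T^-_{\lambda^\ast}(s, \cdot))_\# \pi(\cdot|s) + (1-t^\ast) (\mathrm{id}, T^+_{\lambda^\ast}(s, \cdot))_\# \pi(\cdot|s) \in \Gamma(\pi(\cdot|s), \tilde\pi(\cdot|s))$: its marginals match by construction, and its total cost integrated against $\rho_{\pi}$ equals $\varepsilon$ by~\eqref{eq:t-star}. For optimality, I exploit $T^\pm_{\lambda^\ast}(s,a) \in M_{\lambda^\ast}(s,a)$ to write $\advantage(s, T^\pm_{\lambda^\ast}(s,a)) = \Phi_{\lambda^\ast}(s,a) + \lambda^\ast c(a, T^\pm_{\lambda^\ast}(s,a))$; substituting into the primal objective and applying~\eqref{eq:t-star} yields $\int_{\statespace}\!\int_{\actionspace} \advantage \, \d\tilde\pi \, \d\rho_{\pi} = \int_{\statespace}\!\int_{\actionspace} \Phi_{\lambda^\ast} \, \d\pi \, \d\rho_{\pi} + \lambda^\ast \varepsilon = g(\lambda^\ast)$, which coincides with the primal optimum by Theorem~\ref{thm:dual}. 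The case $\lambda^\ast = 0$ is immediate: the trust-region constraint is slack, $t^\ast = 0$ gives $\tilde\pi = T^+_0(s,\cdot)_\# \pi$, and $T^+_0(s,a) \in \argmax_{a'} \advantage(s,a')$ attains the unconstrained maximum.

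The main obstacle I anticipate is the subdifferential calculus in item~(2): rigorously justifying the interchange of $\partial_\lambda$ with the double integral, and ensuring the selectors $T^\pm_{\lambda^\ast}$ produced in item~(1) realize the extremes of the resulting interval rather than merely lying inside it. A related subtlety in item~(1) is that the correspondences $(s,a) \mapsto \argmin_{a' \in M_\lambda(s,a)} c(a,a')$ and its argmax analogue are not in general upper hemicontinuous, so Berge's theorem alone does not supply measurability, and one must work at the level of graph-measurable correspondences.
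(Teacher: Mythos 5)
Your proposal follows essentially the same route as the paper: measurable selections via the measurable maximum theorem of Aliprantis--Border, first-order optimality of the convex one-dimensional dual expressed through the interval between its one-sided derivatives (equivalently $0\in\partial g(\lambda^{\ast})$) with the derivative--integral interchange justified by a differentiation lemma, the intermediate value theorem for $t^{\ast}$, and the explicit coupling $\gamma_s$ together with the envelope identity $A^{\pi}(s,T^{\pm}_{\lambda^{\ast}}(s,a))=\Phi_{\lambda^{\ast}}(s,a)+\lambda^{\ast}c(a,T^{\pm}_{\lambda^{\ast}}(s,a))$ for feasibility and optimality. The only cosmetic difference is your Danskin/subdifferential phrasing where the paper works directly with left and right derivatives of the regularized advantage, and both treatments leave the $\lambda^{\ast}=0$ case equally terse.
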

Intuitively,~\cref{cor:optimal-policy} suggests that the optimal policy results from displacing the probability mass $\pi(a|s)$ to the maximizers of the $\lambda^\ast$-regularized advantage $\regularizer_{\lambda^\ast}(s,a)$, where $\lambda^\ast\geq 0$ is the optimal dual solution. Since maximizers are generally not unique (i.e., $\mins{\lambda^\ast}{s}{a}$ is not a singleton), one needs to balance between the \emph{closest} (i.e., $\optmaplower{\lambda^\ast}{s}{a}$) and the \emph{furthest apart} (i.e., $\overline{T}_{\lambda^\ast}(s,a)$) to satisfy the trust region constraint.
In the special case $\lambda^\ast=0$, the trust region constraint is either not active (i.e., the optimal policy lies within the trust region) or it does not affect the optimal trust region constraint (i.e., the optimal policy would lie at the boundary of the trust region even if the constraint is removed). In this case, it suffices to displace all probability mass to the closest maximizer of the advantage functions (i.e., $\optmaplower{0}{s}{a}\in\mins{0}{s}{a}$). 
The complete proof of the results of this section is deferred to~\cref{appendix:strongduality}.

\begin{remark}
\label{rem:comparison-dro}
Similar results were previously established in the literature in the context of \gls*{acr:dro} (e.g., see~\cite[Theorem 1]{Gao2016} and~\cite[Theorem 1]{blanchet-murthy2019}). While these results closely inspire our proof, there is a major difference: in~\gls*{acr:dro}, one seeks to evaluate the worst-case cost over an ambiguous set of probability distributions, expressed in terms of the optimal transport discrepancy. As such, the \emph{average} optimal transport discrepancy in the trust region constraint is replaced by a single optimal transport discrepancy. Thus, one does not need to ensure the regularity of the problem with respect to the state (e.g., measurability of $\optmaplowername{\lambda}$ w.r.t. $s$). This is reflected in our assumption of \emph{joint} continuity of the advantage in state and action and the state space being compact. To readily deploy existing results in \gls*{acr:dro}, one needs (i) to consider a single state only (i.e., $\statespace=\{s\}$) or (ii) to define a trust region \emph{for each} state (at the price of infinitely many constraints). 
To transform~\eqref{pb:primal} into a~\gls*{acr:dro}, one might alternatively identify a policy as a probability measure over $\prod_{s\in\statespace}\actionspace$, and hope to deploy standard duality arguments in~\gls*{acr:dro}.
However, the uncountable product of Polish spaces is not a Polish space, which makes all results in~\gls*{acr:dro}, and more generally in optimal transport~\cite{Villani2008}, inapplicable.
\end{remark}

\begin{remark}
\label{rem:comparison-song-1}
A similar result in the discrete case was presented in~\cite{Song2022}. We highlight four major differences.
First, in the discrete setting, problem \eqref{pb:primal} is a finite-dimensional linear optimization problem, for which strong duality holds. Thus, linear programming arguments can be used to derive the dual reformulation. In the continuous setting, the same proof strategy would require to mobilize the abstract machinery of infinite-dimensional linear programming~\cite{Leuenberger1997}.
Second, in the discrete setting, continuity and measurability of all functions are ``for free''. On the contrary, the continuous case imposes a careful analysis of these issues. 
Third, the optimal policy update~\cite{Song2022} implicitly assumes that the set~$\mathcal{D}_{\lambda}(s,a)$ (see~\eqref{eq:def-Phi-D}) is a singleton, which is rarely satisfied in practice. 
Fourth, as a byproduct of our proof, we show that~\eqref{pb:dual} is a (one-dimensional) \emph{convex} optimization problem, which can be solved efficiently via off-the-shelf solvers. This way, we do not need to resort to approximation techniques~\cite[Section 6.1]{Song2022} for the optimal dual multiplier. 
\end{remark}

\noindent\textbf{Discrete state-action spaces.}
In the remainder of this section, we specialize our results to discrete (finite) state-action spaces (which trivially satisfy~\cref{hyp:state-action-space,hyp:continuous-advantage}).
Without loss of generality, we represent the state and action spaces by~$\statespace = \{s_1, \ldots, s_M\}$ and~$\actionspace = \{a_1, \ldots, a_N\}$ where~$M$ and~$N$ are two positive integers, and we describe any policy~$\pi \in \Pi$ and its corresponding state-occupancy measure~$\rho_\pi$ as discrete measures:
\begin{equation}
\label{eq:discrete-measures}
\pi(\cdot|s_i)=\sum_{j=1}^N \pi_{i,j} \diracdelta{a_j}
\qquad
\forall i \in \{1, \ldots, M \},
\qquad
\rho_{\pi} = \sum_{i=1}^M \rho_i \diracdelta{s_i},
\end{equation}
where $\rho_i, \pi_{i,j} \geq 0$ for every $i \in \{1, \ldots, M\},\, j \in \{1, \ldots, N\}$\,, $\sum_{i=1}^M \rho_i = 1$ and~$\sum_{j=1}^N \pi_{i,j} = 1$ for every~$i \in \{1, \ldots, M\}$.\footnote{This representation is also valid beyond the finite state-action space setting when the policies and the state-occupancy measures are empirical distributions with finitely many samples.} The analogous results to~\cref{thm:dual} and~\cref{cor:optimal-policy}  are as follows.

\begin{corollary}[Dual formulation - discrete setting]
\label{cor:dual-discrete}
Let $\varepsilon >0$. For every policy~$\pi \in \Pi$, the following strong duality result holds: 
\begin{align}
    \max_{\substack{t \in [0,1],\underline{b}_{i,j},\overline{b}_{i,j} \in \actionspace,\\
    i \in  \{1, \ldots M \},\, j \in \{1, \ldots, N \}}}
    &
    \begin{aligned}[t]
    &\left\{\sum_{i=1}^M \rho_i \sum_{j=1}^N \pi_{i,j}
    \left(t \advantage(s_i,\underline{b}_{i,j}) + (1-t) \advantage(s_i,\overline{b}_{i,j})\right):\right.
    \\
    &\:\:\;\left.\sum_{i=1}^M \rho_i \sum_{j=1}^N \pi_{i,j}
    \left(t c(a_j,\underline{b}_{i,j}) + (1-t) c(a_j,\overline{b}_{i,j})\right) \leq \varepsilon\right\}
    \end{aligned}
    \label{pb:primal-discrete}\tag{discrete-P}
    \\
    &=
    \min_{\lambda \geq 0} \left\{ \lambda \varepsilon + \sum_{i=1}^M \rho_i \sum_{j=1}^N \pi_{i,j} \regularizer_{\lambda}(s_i, a_j) \right\}. \label{pb:dual-discrete}\tag{discrete-D}
\end{align}
In particular, let~$\lambda^*\geq 0$ be a solution to~\eqref{pb:dual-discrete}, and given for every~$i \in \{1, \ldots, M\},\, j \in \{1, \ldots, N\}$, select any
\begin{equation}
\label{eq:bij*}
    \underline{b}_{i,j}^* \in  \argmin_{a' \in \mins{\lambda^*}{s_i}{a_j}}c(a_j,a'), \qquad 
    \overline{b}_{i,j}^* \in \argmax_{a' \in \mins{\lambda^*}{s_i}{a_j}}c(a_j,a'),
\end{equation}
and let
\begin{equation*}
    \underline{c}
    \coloneqq
    \sum_{i = 1}^M\rho_i\sum_{j = 1}^N \pi_{i,j} c(a_j, \underline{b}_{i,j}^\ast),
    \qquad 
    \overline{c}
    \coloneqq 
    \sum_{i = 1}^M\rho_i\sum_{j = 1}^N \pi_{i,j} c(a_j, \overline{b}_{i,j}^\ast).
\end{equation*}
Then, an optimal policy~$\tilde{\pi}$ is given by
\begin{equation}
\label{eq:opt-policy-discrete}
\tilde{\pi}(\cdot|s_i) = \sum_{j=1}^N \pi_{i,j} \left(t^* \diracdelta{\underline{b}_{i,j}^*} + (1-t^*) \diracdelta{\overline{b}_{i,j}^*}\right),
\qquad
\forall i\in\{1,\ldots,M\},
\end{equation}
with $t^\ast = (\overline{c} - \varepsilon) / (\overline{c}-\underline{c})\in[0,1]$ (and $t^\ast\in[0,1]$ if $\underline{c} = \overline{c}=\varepsilon$) if $\lambda^\ast>0$ and $t^\ast=0$ if $\lambda^\ast=0$.
\end{corollary}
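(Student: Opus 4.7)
The approach is to specialize \cref{thm:dual} and \cref{cor:optimal-policy} to the discrete setting by substituting the atomic representations~\eqref{eq:discrete-measures} of $\pi$ and $\visitationfreq$ into the general-case formulas. There are three steps: reducing the dual of~\cref{thm:dual} to the finite-sum expression~\eqref{pb:dual-discrete}; exhibiting an optimal policy of the parametric form appearing in~\eqref{pb:primal-discrete} via~\cref{cor:optimal-policy}; and solving~\eqref{eq:t-star} in closed form for $t^\ast$.

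For the dual, I would substitute $\visitationfreq=\sum_i\rho_i\diracdelta{s_i}$ and $\pi(\cdot|s_i)=\sum_j\pi_{i,j}\diracdelta{a_j}$ into the dual of~\cref{thm:dual}. The double integral collapses to $\sum_i\rho_i\sum_j\pi_{i,j}\regularizer_\lambda(s_i,a_j)$, which is exactly~\eqref{pb:dual-discrete} by the definition of $\regularizer_\lambda$ in~\eqref{eq:def-Phi-D}.

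For the primal, let $\lambda^\ast$ minimize~\eqref{pb:dual-discrete} and let $\optmaplowername{\lambda^\ast},\optmapuppername{\lambda^\ast}$ be the measurable selection maps of~\cref{cor:optimal-policy}. Setting $\underline{b}_{i,j}^\ast\coloneqq\optmaplower{\lambda^\ast}{s_i}{a_j}$ and $\overline{b}_{i,j}^\ast\coloneqq\optmapupper{\lambda^\ast}{s_i}{a_j}$ makes~\eqref{eq:bij*} hold by construction; moreover, pushing forward the atomic $\pi(\cdot|s_i)$ through $\optmaplowername{\lambda^\ast}(s_i,\cdot)$ yields $\sum_j\pi_{i,j}\diracdelta{\underline{b}_{i,j}^\ast}$ (and analogously for $\optmapuppername{\lambda^\ast}$), so~\eqref{eq:optimal-policy-update} reduces to~\eqref{eq:opt-policy-discrete}. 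To match the optimal values of~\eqref{pb:primal-discrete},~\eqref{pb:primal}, and~\eqref{pb:dual-discrete}, I would sandwich: any tuple $(t,\underline{b}_{i,j},\overline{b}_{i,j})$ feasible in~\eqref{pb:primal-discrete} induces a policy $\tilde\pi$ together with an explicit coupling of $\pi(\cdot|s_i)$ and $\tilde\pi(\cdot|s_i)$ whose cost upper-bounds $\otdiscrepancy{\pi(\cdot|s_i)}{\tilde\pi(\cdot|s_i)}$, so the optimal value of~\eqref{pb:primal-discrete} is bounded above by that of~\eqref{pb:primal}; conversely, the explicit policy~\eqref{eq:opt-policy-discrete} attains the value of~\eqref{pb:primal} by~\cref{cor:optimal-policy} and belongs to the parametric family, giving the reverse inequality. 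Chaining with strong duality from~\cref{thm:dual} closes the argument.

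Finally, specializing~\eqref{eq:t-star} gives $t^\ast\underline{c}+(1-t^\ast)\overline{c}=\varepsilon$. When $\lambda^\ast>0$ and $\underline{c}<\overline{c}$, the unique solution is $t^\ast=(\overline{c}-\varepsilon)/(\overline{c}-\underline{c})$, which lies in $[0,1]$ because the existence of some $t^\ast\in[0,1]$ satisfying~\eqref{eq:t-star}, guaranteed by~\cref{cor:optimal-policy}, forces $\underline{c}\leq\varepsilon\leq\overline{c}$; if $\underline{c}=\overline{c}$, activeness forces $\underline{c}=\overline{c}=\varepsilon$ and any $t^\ast\in[0,1]$ works; the case $\lambda^\ast=0$ is inherited directly. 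The main subtlety I anticipate is verifying carefully that the constraint in~\eqref{pb:primal-discrete} encodes the cost of a \emph{specific} coupling rather than the optimal transport discrepancy itself, so that feasibility in~\eqref{pb:primal-discrete} implies---and is strictly stronger than---feasibility in~\eqref{pb:primal}; once that observation is in place, the rest reduces to substitution into the continuous-case results.
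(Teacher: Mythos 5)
Your proposal is correct and follows essentially the same route as the paper, which proves this corollary simply by substituting the atomic measures~\eqref{eq:discrete-measures} into~\eqref{pb:primal} and~\eqref{pb:dual} and noting that the pushforwards through $\optmaplowername{\lambda^\ast}$ and $\optmapuppername{\lambda^\ast}$ have finite support, so that \cref{thm:dual} and \cref{cor:optimal-policy} specialize directly (with $t^\ast$ obtained by solving~\eqref{eq:t-star} in closed form). Your sandwich argument relating~\eqref{pb:primal-discrete} to~\eqref{pb:primal} and your care that the discrete constraint encodes a specific (possibly suboptimal) coupling are exactly the details the paper leaves implicit.
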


The proof of this result stems from substituting the discrete measures as defined in~\eqref{eq:discrete-measures} in problems~\eqref{pb:primal} and~\eqref{pb:dual}, and observing that the images of the mappings $\optmaplowername{\lambda^*}$ and $\optmapuppername{\lambda^*}$ have finite support in the current setting.
Notably, \cref{cor:dual-discrete} directly provides an implementable algorithm for the policy update, circumventing the difficulty of the mixed-integer optimization problem~\eqref{pb:primal-discrete}: solving the one-dimensional convex program~\eqref{pb:dual-discrete} provides the optimal Lagrange multiplier associated to the trust region constraint of the primal problem which can be directly used to compute the actions~$\underline{b}_{i,j}^*,\, \overline{b}_{i,j}^*$ via~\eqref{eq:bij*}, and thus the policy~$\tilde{\pi}$ via~\eqref{eq:opt-policy-discrete}.

\begin{remark}\label{rem:comparison-song-2}
The policy update suggested by~\eqref{eq:opt-policy-discrete} differs from the one in~\cite{Song2022} (see \cite[Theorem~1,~(5)]{Song2022} where $f_s^*(i,j) \in \{0,1\}$ with their notation). Indeed, our policy update relies on ``splitting the probability mass'': the probability mass $\pi(\cdot|s_i)$ is displaced to~$\underline{b}_{i,j}^*$ and~$\overline{b}_{i,j}^*$ with weights~$t^\ast$ and~$1-t^\ast$, respectively. 
This result is consistent with the Wasserstein \gls*{acr:dro} literature (e.g., see~\cite[Remark 2]{blanchet-murthy2019}).
The result provided in~\cite{Song2022} corresponds to the particular case where~$\underline{b}_{i,j}^*=\overline{b}_{i,j}^*$ which amounts to supposing that the set~$\mathcal D_\lambda(s_i,a_j)$ defined in~\eqref{eq:def-Phi-D} is a singleton. 
We provide further comments and examples in~\cref{appendix:mass-splitting} to illustrate the importance of this ``mass splitting''. 
\end{remark}

\subsection{Policy improvement}
In the next result, we show that our policy update leads to a monotonic improvement of the performance function~$J$ up to the advantage function estimation error. 
\begin{proposition}[Performance improvement]
\label{prop:exact-policy-improv}
Let $\pi \in \Pi$. Consider solutions $\tilde{\pi}^* \in \Pi$ and $\lambda^* \geq 0$ of problems~\eqref{pb:trcp} and~\eqref{pb:dual}, respectively. If the true advantage function~$\advantage$ is approximated by some estimated advantage function~$\advantagehat$ such that $\|\advantage - \advantagehat\|_{\infty} < \infty$, then the following bound holds: 
\begin{equation}
    J(\tilde{\pi}^*)
    \geq
    J(\pi) + \frac{\lambda^*}{1-\gamma} \int_{\statespace} \otdiscrepancy{\pi(\cdot|s)}{\tilde\pi^\ast(\cdot|s)}\d\rho_{\tilde{\pi}^*}(s) - \frac{2 \|\advantage - \advantagehat\|_{\infty}}{1 - \gamma}.
\end{equation}
\end{proposition}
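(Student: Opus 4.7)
The plan is to combine the policy difference lemma~\eqref{eq:perf-diff-lemma} for the true advantage with the explicit mass-splitting structure of the optimal update given in~\cref{cor:optimal-policy}, carefully tracking two sources of estimation error that arise because the update is effectively computed with $\advantagehat$ in place of $\advantage$. The starting point must be~\eqref{eq:perf-diff-lemma} itself rather than the surrogate~\eqref{eq:surrogate}, since the target bound is stated against $\rho_{\tilde{\pi}^*}$ (not $\rho_\pi$).

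First, I would write
\[
J(\tilde{\pi}^*) - J(\pi) = \int_\statespace \int_\actionspace \advantage(s,a) \, d\tilde{\pi}^*(a|s) \, d\rho_{\tilde{\pi}^*}(s),
\]
decompose $\advantage = \advantagehat + (\advantage - \advantagehat)$, and bound the residual $\int\int (\advantage - \advantagehat) \, d\tilde{\pi}^* \, d\rho_{\tilde{\pi}^*}$ in absolute value by $\|\advantage - \advantagehat\|_\infty/(1-\gamma)$, using the total mass of the discounted state-occupancy measure.

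The core step is the pointwise lower bound, for every $s \in \statespace$,
\[
\int_\actionspace \advantagehat(s,a) \, d\tilde{\pi}^*(a|s) \geq \int_\actionspace \advantagehat(s,a) \, d\pi(a|s) + \lambda^* \otdiscrepancy{\pi(\cdot|s)}{\tilde{\pi}^*(\cdot|s)}.
\]
To derive it, I would exploit the explicit form~\eqref{eq:optimal-policy-update}: since $\optmaplower{\lambda^*}{s}{a}, \optmapupper{\lambda^*}{s}{a} \in \mins{\lambda^*}{s}{a}$, the definition of $\regularizer_{\lambda^*}$ together with the feasible choice $a' = a$ yields $\advantagehat(s, \optmaplower{\lambda^*}{s}{a}) \geq \advantagehat(s,a) + \lambda^* c(a, \optmaplower{\lambda^*}{s}{a})$ (and analogously for $\optmapupper{\lambda^*}{s}{a}$). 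Integrating against $\pi(\cdot|s)$ with weights $t^*$ and $1-t^*$ then constructs a valid coupling between $\pi(\cdot|s)$ and $\tilde{\pi}^*(\cdot|s)$ whose cost upper-bounds $\otdiscrepancy{\pi(\cdot|s)}{\tilde{\pi}^*(\cdot|s)}$ by definition of the optimal transport problem, giving the display.

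Finally, I would invoke the basic identity $\int_\actionspace \advantage(s,a) \, d\pi(a|s) = 0$, which implies $\int_\actionspace \advantagehat \, d\pi(\cdot|s) \geq -\|\advantage - \advantagehat\|_\infty$. Integrating the pointwise bound against $\rho_{\tilde{\pi}^*}$ and combining with the residual estimate assembles the desired inequality, with the two error contributions summing to $2\|\advantage - \advantagehat\|_\infty/(1-\gamma)$ after the $1/(1-\gamma)$ normalization of the discounted occupancy is absorbed. I expect the main obstacle to be the pointwise lower bound above: it requires carefully leveraging the $(\optmaplowername{\lambda^*}, \optmapuppername{\lambda^*}, t^*)$ mass-splitting structure from~\cref{cor:optimal-policy} together with the weak-duality-style observation that any admissible coupling cost dominates $\otdiscrepancy{\pi(\cdot|s)}{\tilde{\pi}^*(\cdot|s)}$; the rest is essentially bookkeeping of the two error terms.
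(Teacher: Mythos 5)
Your proposal is correct and follows essentially the same route as the paper's proof: start from the performance difference lemma under $\rho_{\tilde\pi^*}$, substitute the mass-splitting form of $\tilde\pi^*$ from \cref{cor:optimal-policy}, use the maximizer identity $\advantagehat(s,T(s,a))-\lambda^* c(a,T(s,a))=\regularizer_{\lambda^*}(s,a)\geq\advantagehat(s,a)$ together with the fact that the induced coupling's cost dominates $\otdiscrepancy{\pi(\cdot|s)}{\tilde\pi^*(\cdot|s)}$, and finish with $\int_\actionspace\advantage(s,a)\,\d\pi(a|s)=0$, paying $\|\advantage-\advantagehat\|_\infty/(1-\gamma)$ once in each direction when swapping between $\advantage$ and $\advantagehat$. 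The only difference is organizational (you isolate a per-state lower bound and front-load the error decomposition, whereas the paper runs a single chain of inequalities), which does not change the argument.
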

\cref{prop:exact-policy-improv} indicates that optimal transport-based trust region policy optimization leads to monotonic improvement of the performance function when we have access to the true advantage function.
The proof, postponed to~\cref{appendix:proof-policy-improvement}, of this result builds on the performance difference lemma (see~\eqref{eq:perf-diff-lemma}) and uses the closed-form expression of the optimal policy solving problem~(\ref{pb:trcp}) as constructed in the proof of~\cref{thm:dual} (see~\cref{cor:optimal-policy}). The analog of this result for a finite action space was proved in \cite[Theorem~2, p.~5]{Song2022}. To the best of our knowledge, this result is novel for the continuous state-action space setting.

\section{Practical Optimal Transport Trust Region Policy Optimization Algorithm}
\label{sec:practical-ottrpo}

In this section, we use the duality results on~\cref{sec:ot-trpo} to derive a practical algorithm for~\gls*{acr:ottrpo}.
Herein, we restrict the policy search set $\Pi$ to the set of policies~$\pi_{\theta}$ parametrized by a vector~$\theta \in \mathbb{R}^d$ for some integer $d > 0$. We require the policy parametrization to be continuously differentiable with respect to $\theta$ (for every state and action). This way, we simultaneously cover the direct parametrization (for which~\cref{cor:dual-discrete} directly provides a policy update) as well as commonly used policy parametrizations (e.g., softmax and the Gaussian policies). Accordingly, the dual problem~(\ref{pb:dual}) can be reformulated as follows for every $\theta \in \mathbb{R}^d$: 
\begin{equation}
\label{eq:algo-objective}
\min_{\lambda \geq 0}  G(\lambda, \theta)
    \coloneqq
    \lambda\varepsilon + \int_{\statespace} \int_{\actionspace} \max_{a' \in \actionspace}\{\advantagetheta(s,a') - \lambda c(a,a')\}\d\pi_{\theta}(a|s) \d\visitationfreqtheta(s).
\end{equation}

\begin{multicols}{2}
Given a current policy represented by the vector~$\theta$, we first solve the one-dimensional convex problem~(\ref{eq:algo-objective}) to obtain its solution~$\lambda^*$. Then, we use the optimal dual multiplier $\lambda^*$ to derive the optimal policy update within the trust region. The procedure is summarized in Algorithm~\ref{algo:policy-update}.
\atmargin{}{Formatted badly!}
Depending on the parametrization of the policy, the steps of~\cref{algo:policy-update-c-wtrpo} are as follows: 

\columnbreak
\begin{algorithm}[H]
\label{algo:policy-update-c-wtrpo}
\caption{\textsc{OT-TRPO}.}
\begin{algorithmic}[1]\label{algo:policy-update}
\STATE Initialize $\pi_{\theta_0}$
\FORALL{$t = 0, 1, \ldots$}
\STATE Estimate $A^{\pi_{\theta_t}}$ and $\rho_{\pi_{\theta_t}}$.
\STATE Compute $\lambda^*\in \text{argmin}_{\lambda \geq 0} G(\lambda, \theta_t)$.
\STATE Update $\theta_{t}$ to $\theta_{t+1}$ using $\lambda^\ast$.
\ENDFOR
\end{algorithmic}
\end{algorithm}
\end{multicols}
\looseness -1
\noindent\textbf{Algorithm~\ref{algo:policy-update} - step 3.} In the discrete setting, the visitation frequency is estimated via Monte Carlo methods. In the continuous case, we weight every visited state equally. We propose three ways to estimate the unknown advantage function via samples\footnote{In the experiments reported in the main paper we used the first and the second method for discrete and continuous environments, respectively. In~\cref{app:details-policy-update} we further comment on the different methods.}:
\begin{enumerate}[leftmargin = *]
\item Monte Carlo methods or TD-learning (for discrete settings only);
\item \gls*{acr:gae}~\cite{Schulman2016}, using a neural network to approximate the value function like in standard actor-critic methods; and
\item Direct estimation via non-linear approximators (e.g., using directly a neural network for the advantage function).
\end{enumerate}

\noindent\textbf{Algorithm~\ref{algo:policy-update} - step 4 (evaluation of $G$).} 
Depending on the setting, we propose various ways to evaluate $G(\lambda,\theta)$. They all apply to both continuous and discrete states. 
\begin{enumerate}[leftmargin = *]
    \item\emph{Finite actions}: Since the maximization in~\eqref{eq:algo-objective} is over finitely many actions, we can directly evaluate~\eqref{eq:algo-objective} for any $\lambda\geq 0$.
    
    \item\emph{Gaussian policy parametrization}: 
    With $m(s)$ being the mean of the Gaussian policy (with fixed variance), we approximate $G(\lambda,\theta)$ by 
    \begin{equation*}
        G(\lambda,\theta)
        \approx
        \begin{cases}
            \lambda\varepsilon+\sum_{s\in\hat\statespace}\max_{a' \in \{a,m(s)\}}\{\advantagetheta(s,a')-\lambda c(m_\theta(s),a')\} \visitationfreqtheta(s)
            & \text{if $\advantagetheta$ via \gls*{acr:gae}}, 
            \\
            \lambda\varepsilon+\sum_{s\in\hat\statespace}\max_{a' \in \hat\actionspace(s)}\{\advantagetheta(s,a')-\lambda c(m_\theta (s),a')\}\visitationfreqtheta(s)
            & \text{if $\advantagetheta$ via NN},
        \end{cases}
    \end{equation*}
    where $\hat\statespace$ are the state visited in the trajectory and $\hat\actionspace(s)$ is a (possibly state-dependent) collection of samples collected from $\actionspace$. 
    
    \item\emph{Arbitrary policy parametrization}: 
    For a neural network approximation of the advantage function, we approximate $G(\lambda,\theta)$ by 
    \begin{equation*}
        \textstyle 
        G(\lambda,\theta)
        \approx
        \lambda\varepsilon+\sum_{s\in\hat\statespace}\sum_{a\in\hat\actionspace_1(s)}\max_{a' \in \hat\actionspace_2(s)}\{\advantagetheta(s,a')-\lambda c(a,a')\}\pi_{\theta}(a|s)\visitationfreqtheta(s),
    \end{equation*}
    where $\hat\statespace$ are the state visited in the trajectory and $\hat\actionspace_i(s)$ are (possibly state-dependent) collections of samples collected from $\actionspace$. 
\end{enumerate}

\noindent\textbf{Algorithm~\ref{algo:policy-update} - step 4 (solving for $\lambda^\ast$).}
Since~\eqref{pb:dual} is a one-dimensional convex optimization problem, $\lambda^\ast$ can be found using any solver for convex optimization problems.

\noindent\textbf{Algorithm~\ref{algo:policy-update} - step 5.}
Update the parameter vector $\theta$.
\begin{enumerate}[leftmargin = *]
    \item\emph{Direct parametrization (finite spaces)}: Update $\theta$ according to~\eqref{eq:opt-policy-discrete} and~\eqref{eq:bij*}.
    
    \item\emph{Direct parametrization via policy network (continuous states, discrete actions)}: Use~\eqref{eq:opt-policy-discrete} and~\eqref{eq:bij*} to compute the optimal policy update at the visited states, denoted by $\pi^\ast_{\theta_t}$. Then, update the policy network by performing gradient descent on the loss $L(\theta)=\sum_{s\in\hat\statespace}\rho_{\pi_{\theta_t}}(s)\norm{\pi_\theta(\cdot|s)-\pi^\ast_{\theta_t}(\cdot|s)}^2$ to steer $\pi_{\theta}$ towards the optimal policy update $\pi_{\theta_t}^\ast$ \emph{within} the trust region. 
    
    \item\emph{Arbitrary policy parametrization}: 
    Since there are infinitely many actions, the computation of the maximization is computationally demanding, and so~\cref{cor:optimal-policy} cannot be directly utilized for the policy update. Yet, we can update the policy via gradient ascent.
    The intuition is as follows: according to~\cref{cor:optimal-policy}, the optimal policy update attains the maximum $\max_{a'\in\actionspace}\{A^{\pi_{\theta_t}}(s,a')-\lambda^\ast c(a,a')\}$ at each state. Thus, we can steer the policy $\pi_\theta$ to maximize
    \begin{equation*}
        \textstyle
        \theta
        \mapsto
        \sum_{s\in\hat\statespace}\int_{\actionspace}\max_{a'\in\actionspace}\{A^{\pi_{\theta_t}}(s,a')-\lambda^\ast c(a_\theta,a')\}\d\pi_{\theta}(a_\theta|s) \rho_{\pi_{\theta_t}}(s).
    \end{equation*}
    In the particular case of a Gaussian policy with parametrized mean (and fixed variance), combined with \gls*{acr:gae} estimate of the advantage function, one can maximize
    \begin{equation*}
        \textstyle
        \theta
        \mapsto
        \sum_{s\in\hat\statespace}\max_{}\{A^{\pi_{\theta_t}}(s,a')-\lambda^\ast c(m_\theta(s),a'),0\} \rho_{\pi_{\theta_t}}(s).
    \end{equation*}
    Intuitively, this update implicitly estimates the transport maps $\optmapuppername{\lambda^\ast}$ and $\optmaplowername{\lambda^\ast}$, which are needed for the optimal policy update. This way, we steer the policy network towards the optimal policy update \emph{within} the trust region. Among others, this policy update allows for the following interpretation: imposing an optimal transport-based trust region constraints is, at least formally, equivalent to maximizing a \emph{regularized} advantage function, where the value of the regularization $\lambda^\ast\geq 0$ is based on the transport cost $c$ and the radius of the trust region $\varepsilon$.
\end{enumerate}

\section{Experiments and Insights}\label{section:experiments}
In this section, we evaluate the performance of \gls*{acr:ottrpo}
across a variety of environments~\cite{Brockman2016,Todorov2012} of increasing complexity, ranging from discrete to continuous settings. We compare it to the classical \gls*{acr:trpo}~\cite{Schulman2015,Hill2018} and \gls*{acr:ppo}~\cite{Schulman2017,Hill2018}, with A2C~\cite{Mnih2016,Hill2018}, with the recent approaches leveraging the Wasserstein distance, \gls*{acr:bgpg}~\cite{Pacchiano2020} and \gls*{acr:wnpg}~\cite{Moskovitz2020} (in continuous settings), and with \gls*{acr:wpo}~\cite{Song2022} (in discrete settings). 
The training curves are shown in \cref{fig:experiments:allenvs}; see~\cref{app:implementation} for implementation details,~\cref{app:details-experiments} for further details on the experimental results, and~\cref{app:ablation} for an ablation study on our algorithm.

\begin{figure}[h]
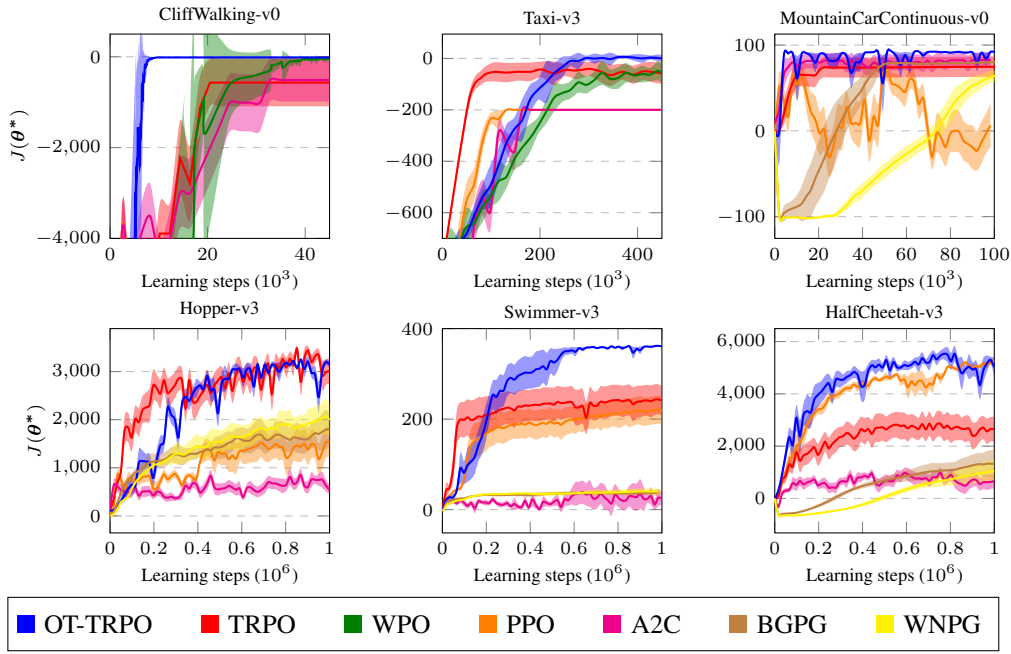

\centering
\pgfplotstableread[col sep = comma]{figures/CliffWalking/PPO.csv}\ppoCliff
\pgfplotstableread[col sep = comma]{figures/CliffWalking/TRPO.csv}\trpoCliff
\pgfplotstableread[col sep = comma]{figures/CliffWalking/WPO.csv}\wpoCliff
\pgfplotstableread[col sep = comma]{figures/CliffWalking/WTRPO.csv}\wtrpoCliff
\pgfplotstableread[col sep = comma]{figures/CliffWalking/A2C.csv}\acCliff
\pgfplotstableread[col sep = comma]{figures/Taxi/PPO.csv}\ppoTaxi
\pgfplotstableread[col sep = comma]{figures/Taxi/TRPO.csv}\trpoTaxi
\pgfplotstableread[col sep = comma]{figures/Taxi/WPO.csv}\wpoTaxi
\pgfplotstableread[col sep = comma]{figures/Taxi/WTRPO.csv}\wtrpoTaxi
\pgfplotstableread[col sep = comma]{figures/Taxi/A2C.csv}\acTaxi
\pgfplotstableread[col sep = comma]{figures/MountainCarContinuous/PPO.csv}\ppoMC
\pgfplotstableread[col sep = comma]{figures/MountainCarContinuous/TRPO.csv}\trpoMC
\pgfplotstableread[col sep = comma]{figures/MountainCarContinuous/WTRPO.csv}\wtrpoMC
\pgfplotstableread[col sep = comma]{figures/MountainCarContinuous/A2C.csv}\acMC
\pgfplotstableread[col sep = comma]{figures/MountainCarContinuous/WNPG.csv}\wnpgMC
\pgfplotstableread[col sep = comma]{figures/MountainCarContinuous/BGPG.csv}\bgpgMC
\pgfplotstableread[col sep = comma]{figures/Hopper/PPO.csv}\ppoHopper
\pgfplotstableread[col sep = comma]{figures/Hopper/TRPO.csv}\trpoHopper
\pgfplotstableread[col sep = comma]{figures/Hopper/WTRPO.csv}\wtrpoHopper
\pgfplotstableread[col sep = comma]{figures/Hopper/A2C.csv}\acHopper
\pgfplotstableread[col sep = comma]{figures/Hopper/WNPG.csv}\wnpgHopper
\pgfplotstableread[col sep = comma]{figures/Hopper/BGPG.csv}\bgpgHopper
\pgfplotstableread[col sep = comma]{figures/Swimmer/PPO.csv}\ppoSwimmer
\pgfplotstableread[col sep = comma]{figures/Swimmer/TRPO.csv}\trpoSwimmer
\pgfplotstableread[col sep = comma]{figures/Swimmer/WTRPO.csv}\wtrpoSwimmer
\pgfplotstableread[col sep = comma]{figures/Swimmer/A2C.csv}\acSwimmer
\pgfplotstableread[col sep = comma]{figures/Swimmer/WNPG.csv}\wnpgSwimmer
\pgfplotstableread[col sep = comma]{figures/Swimmer/BGPG.csv}\bgpgSwimmer
\pgfplotstableread[col sep = comma]{figures/HalfCheetah/PPO.csv}\ppoHalf
\pgfplotstableread[col sep = comma]{figures/HalfCheetah/TRPO.csv}\trpoHalf
\pgfplotstableread[col sep = comma]{figures/HalfCheetah/WTRPO.csv}\wtrpoHalf
\pgfplotstableread[col sep = comma]{figures/HalfCheetah/A2C.csv}\acHalf
\pgfplotstableread[col sep = comma]{figures/HalfCheetah/WNPG.csv}\wnpgHalf
\pgfplotstableread[col sep = comma]{figures/HalfCheetah/BGPG.csv}\bgpgHalf
    
\begin{tikzpicture}[every mark/.append style={mark size=.25pt}]
\begin{groupplot}[group style={group size=3 by 2,
                               horizontal sep=1.5cm,
                               vertical sep=1.2cm,
                               group name=myplot},
                  width=4.5cm,
                  height=4.3cm,]
\input{figures/CliffWalking/tikz.tex}
\input{figures/Taxi/tikz.tex}
\input{figures/MountainCarContinuous/tikz.tex}
\input{figures/Hopper/tikz.tex}
\input{figures/Swimmer/tikz.tex}
\input{figures/HalfCheetah/tikz.tex}
\end{groupplot}
\end{tikzpicture}
\begin{tikzpicture}
\matrix [draw, column sep=15] {
  \node [fill=\colorwtrpo,minimum width=2,label=right:\gls{acr:ottrpo}] {};
  &
  \node [fill=\colortrpo,minimum width=2,label=right:\gls{acr:trpo}] {};
  &
  \node [fill=\colorwpo,minimum width=2,label=right:\gls{acr:wpo}] {};
  &
  \node [fill=\colorppo,minimum width=2,label=right:\gls{acr:ppo}] {};
  &
  \node [fill=\colorac,minimum width=2,label=right:A2C] {};
  &
  \node [fill=\colorbgpg,minimum width=2,label=right:\gls{acr:bgpg}] {};
  &
  \node [fill=\colorwnpg,minimum width=2,label=right:\gls{acr:wnpg}] {};\\
};
\end{tikzpicture}
\vspace{-.2cm}
\caption{Cumulative rewards during the training process in different environments. The shaded area represents the mean $\pm$ the standard deviation across $10$ independent runs. Every policy evaluation in each run is averaged over $10$ sampled trajectories.}
\label{fig:experiments:allenvs}
\end{figure}

Our approach is shown to consistently improve over the other algorithms: \gls*{acr:ottrpo} leads to larger final returns, with lower variance, and only in few cases at the expense of a slightly slower learning curve. 
Four remarks are in order. 
First, the performance gain of \gls*{acr:ottrpo} compared to \gls*{acr:bgpg} and \gls*{acr:wnpg} confirms that trust regions help stabilize training, as already observed in \cite{Schulman2015}.
Second, optimal transport discrepancies induce a more natural notion of ``closeness'' between policies compared to the \gls*{acr:kl} divergence (e.g., see~\cite[Example 2.1]{Aolaritei2022}).
For instance, in CliffWalking-v0, consider the optimal policy $\pi^\ast$ and the candidate policy $\pi$ depicted below, which differ at one state only (see figure). 
\begin{wrapfigure}{r}{0.35\textwidth}
\centering
\begin{tikzpicture}
    \draw[step=0.5,black,thin] (0.0,0.0) grid (4.5,1.5);
    \draw[fill=white] (0.5,0) rectangle (4,0.5);
    \draw[fill=black,opacity=0.3] (0.5,0) rectangle (4,0.5);
    \draw[->,red,ultra thick] (0.15,0.25) --++ (0,0.6) --++ (4.1,0) --++ (0,-0.6); 
    \draw[->,blue,ultra thick] (0.35,0.25) --++ (0,0.4) --++ (3.4,0) --++ (0,-0.4); 
    \node[anchor=west] at (-0.1,1.8) {\footnotesize \textcolor{red}{$\pi^\ast$: optimal policy}, \textcolor{blue}{$\pi$: ``close'' policy}};
\end{tikzpicture}    
\end{wrapfigure}
The optimal transport discrepancy between $\pi$ and $\pi^\ast$ is $\visitationfreq(s)C(\pi(\cdot|s),\pi^\ast(\cdot|s))=\visitationfreq(s) c(\text{Down},\text{Right})$. When using the~\gls*{acr:kl} divergence, instead, the discrepancy is infinite, since the two policies do not share the same support. 
In particular, if initialized with $\pi$, \gls*{acr:trpo} cannot converge to the optimal policy, regardless of the radius of the trust region. 
Third,~\gls*{acr:ottrpo} improves on~\gls*{acr:wpo}, in two ways: (i) it leads to superior performances of the trained policies and (ii) it does not violate the trust region constraint (which, e.g., in Taxi-v3 is the case for $72\%$ of the updates of~\gls*{acr:wpo}). This performance improvement results from the ``mass splitting'' (see~\cref{rem:comparison-song-2}), which is the only difference between the two algorithms (in discrete settings). 
Fourth, in continuous settings, the performance of~\gls*{acr:ottrpo} is aligned with the best-performing alternative approach. In the environment Swimmer-v3, it even yields an improvement of more than 50\% in the performance of the trained agent.

\section{Conclusion and Future Work}
We studied trust region policy optimization for continuous state-action spaces whereby the trust region is defined in terms of a general optimal transport discrepancy.
Our analysis bases on a one-dimensional convex dual reformulation of the optimization problem for the policy update which (i) enjoys strong duality and (ii) directly characterizes the optimal policy update, bypassing the computational burden of evaluating optimal transport discrepancies.
Moreover, we show that the policy update can yield a monotonic improvement of the performance index.
Empowered by our theoretic results, we propose a novel algorithm, \gls*{acr:ottrpo}, for trust region policy optimization with optimal transport discrepancies. We evaluate its performance across several environments. Our results reveal that trust regions defined by optimal transport discrepancies can offer advantages over the~\gls*{acr:kl} divergence or non-trust region methods. 

There are several research directions that merit further investigation. We highlight two.
First, transport costs provide us actionable knobs to shape the geometry of the trust region, and can be used to encode prior knowledge on the environment or preferred exploration strategies. 
Second, we would like to study the convergence properties of the proposed algorithm. 
 
\section*{Acknowledgements}

This project has received funding from Google Brain, Swiss National Science Foundation under the NCCR Automation (grant agreement 51NF40\_180545), and it was partially supported by the ETH AI Center.

\bibliographystyle{plain}
\bibliography{biblio}

\begin{thebibliography}{10}

\bibitem{Aliprantis2006}
Charalambos~D. Aliprantis and Kim~C. Border.
\newblock {\em Infinite Dimensional Analysis: a Hitchhiker's Guide}.
\newblock Springer, Berlin; London, 2006.

\bibitem{Ambrosio2008}
Luigi Ambrosio, Nicola Gigli, and Giuseppe Savar{\'{e}}.
\newblock {\em Gradient flows: In Metric Spaces and in the Space of Probability
  Measures}.
\newblock Springer, 2008.

\bibitem{Aolaritei2022}
Liviu Aolaritei, Nicolas Lanzetti, Hongruyu Chen, and Florian Dörfler.
\newblock Uncertainty propagation via optimal transport ambiguity sets.
\newblock {\em arXiv preprint arXiv:2205.00343}, 2022.

\bibitem{blanchet-murthy2019}
Jose Blanchet and Karthyek Murthy.
\newblock Quantifying distributional model risk via optimal transport.
\newblock {\em Mathematics of Operations Research}, 44(2):565--600, 2019.

\bibitem{Brockman2016}
Greg Brockman, Vicki Cheung, Ludwig Pettersson, Jonas Schneider, John Schulman,
  Jie Tang, and Wojciech Zaremba.
\newblock {OpenAI Gym}.
\newblock {\em arXiv preprint arXiv:1606.01540}, 2016.

\bibitem{Cuturi2013}
Marco Cuturi.
\newblock Sinkhorn distances: Lightspeed computation of optimal transport.
\newblock In {\em Advances in Neural Information Processing Systems},
  volume~26. Curran Associates, Inc., 2013.

\bibitem{Gao2017}
Rui Gao, Xi~Chen, and Anton~J. Kleywegt.
\newblock Wasserstein distributionally robust optimization and variation
  regularization.
\newblock {\em arXiv preprint arXiv:1712.06050}, 2017.

\bibitem{Gao2016}
Rui Gao and Anton~J. Kleywegt.
\newblock Distributionally robust stochastic optimization with {W}asserstein
  distance.
\newblock {\em arXiv preprint arXiv:1604.02199}, 2016.

\bibitem{Hill2018}
Ashley Hill, Antonin Raffin, Maximilian Ernestus, Adam Gleave, Anssi
  Kanervisto, Rene Traore, Prafulla Dhariwal, Christopher Hesse, Oleg Klimov,
  Alex Nichol, Matthias Plappert, Alec Radford, John Schulman, Szymon Sidor,
  and Yuhuai Wu.
\newblock Stable baselines.
\newblock \url{https://github.com/hill-a/stable-baselines}, 2018.

\bibitem{Kaiser2020atari}
{\L}ukasz Kaiser, Mohammad Babaeizadeh, Piotr Mi{\l}os, B{\l}a{\.z}ej
  Osi{\'n}ski, Roy~H Campbell, Konrad Czechowski, Dumitru Erhan, Chelsea Finn,
  Piotr Kozakowski, Sergey Levine, et~al.
\newblock Model based reinforcement learning for {A}tari.
\newblock In {\em International Conference on Learning Representations}, 2019.

\bibitem{Kakade2002b}
Sham Kakade and John Langford.
\newblock Approximately optimal approximate reinforcement learning.
\newblock In {\em International Conference on Machine Learning}, pages
  267--274. PMLR, 2002.

\bibitem{Klenke2008}
Achim Klenke.
\newblock {\em Probability Theory: A Comprehensive Course}.
\newblock Springer, 2008.

\bibitem{kuhn-et-al2019survey}
Daniel Kuhn, Peyman~Mohajerin Esfahani, Viet~Anh Nguyen, and Soroosh
  Shafieezadeh-Abadeh.
\newblock Wasserstein distributionally robust optimization: Theory and
  applications in machine learning.
\newblock In {\em Operations research \& management science in the age of
  analytics}, pages 130--166. Informs, 2019.

\bibitem{lanzetti2022first}
Nicolas Lanzetti, Saverio Bolognani, and Florian D{\"o}rfler.
\newblock First-order conditions for optimization in the {W}asserstein space.
\newblock {\em arXiv preprint arXiv:2209.12197}, 2022.

\bibitem{Leuenberger1997}
David~G. Luenberger.
\newblock {\em {Optimization by vector space methods}}.
\newblock John Wiley \& Sons, 1997.

\bibitem{mao2016resourcemanagement}
Hongzi Mao, Mohammad Alizadeh, Ishai Menache, and Srikanth Kandula.
\newblock Resource management with deep reinforcement learning.
\newblock In {\em 15th ACM workshop on hot topics in networks}, pages 50--56,
  2016.

\bibitem{Mnih2016}
Volodymyr Mnih, Adria~Puigdomenech Badia, Mehdi Mirza, Alex Graves, Timothy
  Lillicrap, Tim Harley, David Silver, and Koray Kavukcuoglu.
\newblock Asynchronous methods for deep reinforcement learning.
\newblock In {\em International Conference on Machine Learning}, pages
  1928--1937. PMLR, 2016.

\bibitem{mnih2013atari}
Volodymyr Mnih, Koray Kavukcuoglu, David Silver, Alex Graves, Ioannis
  Antonoglou, Daan Wierstra, and Martin Riedmiller.
\newblock Playing atari with deep reinforcement learning.
\newblock {\em arXiv preprint arXiv:1312.5602}, 2013.

\bibitem{Mohajerin2018}
Peyman Mohajerin~Esfahani and Daniel Kuhn.
\newblock Data-driven distributionally robust optimization using the
  {W}asserstein metric: performance guarantees and tractable reformulations.
\newblock {\em Mathematical Programming}, 171(1):115--166, 2018.

\bibitem{Moskovitz2020}
Ted Moskovitz, Michael Arbel, Ferenc Huszar, and Arthur Gretton.
\newblock Efficient {W}asserstein natural gradients for reinforcement learning.
\newblock {\em arXiv preprint arXiv:2010.05380}, 2020.

\bibitem{Munkres2000}
James~R. Munkres.
\newblock {\em Topology}.
\newblock Featured Titles for Topology. Prentice Hall, Incorporated, 2000.

\bibitem{Pacchiano2020}
Aldo Pacchiano, Jack Parker-Holder, Yunhao Tang, Anna Choromanska, Krzysztof
  Choromanski, and Michael~I. Jordan.
\newblock Learning to score behaviors for guided policy optimization.
\newblock In {\em International Conference on Machine Learning}, pages
  7401--7410. PMLR, 2020.

\bibitem{Peyre2019}
Gabriel Peyr{\'{e}} and Marco Cuturi.
\newblock Computational optimal transport.
\newblock {\em Foundations and Trends in Machine Learning}, 11(5-6):1--257,
  2019.

\bibitem{puterman2005}
Martin~L Puterman.
\newblock {\em Markov decision processes: discrete stochastic dynamic
  programming}.
\newblock John Wiley \& Sons, 2005.

\bibitem{rahimian2019distributionally}
Hamed Rahimian and Sanjay Mehrotra.
\newblock Distributionally robust optimization: A review.
\newblock {\em arXiv preprint arXiv:1908.05659}, 2019.

\bibitem{Richemond2017}
Pierre~H. Richemond and Brendan Maginnis.
\newblock On {W}asserstein reinforcement learning and the {F}okker-{P}lanck
  equation.
\newblock {\em arXiv preprint arXiv:1712.07185}, 2017.

\bibitem{Rockafellar2015}
Ralph~Tyrell Rockafellar.
\newblock {\em Convex Analysis}.
\newblock Princeton University Press, 2015.

\bibitem{Rudin1987}
Walter Rudin.
\newblock {\em Real and Complex Analysis, 3rd Ed.}
\newblock McGraw-Hill, Inc., USA, 1987.

\bibitem{Schulman2015}
John Schulman, Sergey Levine, Philipp Moritz, Michael~I. Jordan, and Pieter
  Abbeel.
\newblock Trust region policy optimization.
\newblock In {\em International Conference on Machine Learning}, pages
  1889--1897. PMLR, 2015.

\bibitem{Schulman2016}
John Schulman, Philipp Moritz, Sergey Levine, Michael~I. Jordan, and Pieter
  Abbeel.
\newblock High-dimensional continuous control using generalized advantage
  estimation.
\newblock In {\em International Conference on Learning Representations}, 2016.

\bibitem{Schulman2017}
John Schulman, Filip Wolski, Prafulla Dhariwal, Alec Radford, and Oleg Klimov.
\newblock Proximal policy optimization algorithms.
\newblock {\em arXiv preprint arXiv:1707.06347}, 2017.

\bibitem{silver2016go}
D.~Silver, A.~Huang, and C.~et~al. Maddison.
\newblock Mastering the game of go with deep neural networks and tree search.
\newblock {\em Nature 529, 484–489}, 2016.

\bibitem{Song2022}
Jun Song, Chaoyue Zhao, and Niao He.
\newblock Efficient {W}asserstein and {S}inkhorn policy optimization, 2022.

\bibitem{taskesen2021semi}
Bahar Taskesen, Soroosh Shafieezadeh-Abadeh, and Daniel Kuhn.
\newblock Semi-discrete optimal transport: Hardness, regularization and
  numerical solution.
\newblock {\em arXiv preprint arXiv:2103.06263}, 2021.

\bibitem{Terjek2021}
D{\'a}vid Terj{\'e}k and Diego Gonz{\'a}lez-S{\'a}nchez.
\newblock Optimal transport with $f$-divergence regularization and generalized
  {S}inkhorn algorithm.
\newblock In {\em International Conference on Artificial Intelligence and
  Statistics}, pages 5135--5165. PMLR, 2022.

\bibitem{Todorov2012}
Emanuel Todorov, Tom Erez, and Yuval Tassa.
\newblock Mujoco: A physics engine for model-based control.
\newblock In {\em IEEE/RSJ International Conference on Intelligent Robots and
  Systems}, pages 5026--5033. IEEE, 2012.

\bibitem{Villani2008}
Cédric Villani.
\newblock {\em Optimal Transport: Old and New}.
\newblock Springer-Verlag Berlin Heidelberg, 2008.

\bibitem{ward2019improving}
Patrick~Nadeem Ward, Ariella Smofsky, and Avishek~Joey Bose.
\newblock Improving exploration in soft-actor-critic with normalizing flows
  policies.
\newblock {\em arXiv preprint arXiv:1906.02771}, 2019.

\bibitem{xenou2019boardgames}
Konstantia Xenou, Georgios Chalkiadakis, and Stergos Afantenos.
\newblock Deep reinforcement learning in strategic board game environments.
\newblock {\em Springer International Publishing}, 2019.

\bibitem{zhang-et-al2018}
Ruiyi Zhang, Changyou Chen, Chunyuan Li, and Lawrence Carin.
\newblock Policy optimization as wasserstein gradient flows.
\newblock In {\em International Conference on Machine Learning}, pages
  5737--5746. PMLR, 2018.

\bibitem{zhao-guan2018}
Chaoyue Zhao and Yongpei Guan.
\newblock Data-driven risk-averse stochastic optimization with {W}asserstein
  metric.
\newblock {\em Operations Research Letters}, 46(2):262--267, 2018.

\bibitem{Guanjie2018recommendation}
Guanjie Zheng, Fuzheng Zhang, Zihan Zheng, Yang Xiang, Nicholas~Jing Yuan, Xing
  Xie, and Zhenhui Li.
\newblock {DRN}: A deep reinforcement learning framework for news
  recommendation.
\newblock In {\em World Wide Web Conference}, pages 167--176, 2018.

\bibitem{yue2019videogames}
Yue Zheng.
\newblock Reinforcement learning and video games.
\newblock {\em arXiv preprint arXiv:1909.04751}, 2019.

\bibitem{zhou2017chemical}
Zhenpeng Zhou, Xiaocheng Li, and Richard~N Zare.
\newblock Optimizing chemical reactions with deep reinforcement learning.
\newblock {\em ACS Central Science}, 3(12):1337--1344, 2017.

\end{thebibliography}

\section*{Checklist}


\begin{enumerate}[leftmargin=*]

\item For all authors...
\begin{enumerate}
   \item Do the main claims made in the abstract and introduction accurately reflect the paper's contributions and scope?
   \answerYes{All claims are in line with the paper and its contributions.}
   \item Did you describe the limitations of your work?
   \answerYes{We discuss both benefits and drawbacks of our methodology.}
   \item Did you discuss any potential negative societal impacts of your work?
   \answerNA{We do not believe our work  entails  potential negative societal impacts.}
   \item Have you read the ethics review guidelines and ensured that your paper conforms to them?
    \answerYes{We carefully read ethics review guidelines.}
\end{enumerate}

\item If you are including theoretical results...
\begin{enumerate}
    \item Did you state the full set of assumptions of all theoretical results?
    \answerYes{All assumptions are stated before the corresponding theoretic results.}
    \item Did you include complete proofs of all theoretical results?
    \answerYes{All proofs are included in the supplementary material.}
\end{enumerate}

\item If you ran experiments...
\begin{enumerate}
    \item Did you include the code, data, and instructions needed to reproduce the main experimental results (either in the supplemental material or as a URL)?
    \answerYes{Our code, with  instructions, is  part of the supplementary material.}
    \item Did you specify all the training details (e.g., data splits, hyperparameters, how they were chosen)?
    \answerYes{All training details are specified in the supplementary material.}
    \item Did you report error bars (e.g., with respect to the random seed after running experiments multiple times)?
    \answerYes{Our plots include error bars (mean $\pm$ standard deviation).}
    \item Did you include the total amount of compute and the type of resources used (e.g., type of GPUs, internal cluster, or cloud provider)?
    \answerYes{We comment on the computational time in the supplementary material.}
\end{enumerate}

\item If you are using existing assets (e.g., code, data, models) or curating/releasing new assets...
\begin{enumerate}
    \item If your work uses existing assets, did you cite the creators?
    \answerYes{We cited the creators of OpenAI, Mujoco (which we used for experiments), stable baselines (which provided algorithms), and the creators of state-of-the-art algorithms.}
    \item Did you mention the license of the assets?
    \answerNA{The license of the assets is easily verifiable following its reference.}
    \item Did you include any new assets either in the supplemental material or as a URL?
    \answerYes{Our code is included in the supplementary material.}
    \item Did you discuss whether and how consent was obtained from people whose data you're using/curating?
    \answerNA{All assets are publicly available or asked to the authors.}
    \item Did you discuss whether the data you are using/curating contains personally identifiable information or offensive content?
    \answerNA{The data we used does not contain personal information or offensive content.}
\end{enumerate}

\item If you used crowdsourcing or conducted research with human subjects...
\begin{enumerate}
    \item Did you include the full text of instructions given to participants and screenshots, if applicable?
    \answerNA{We did not use crowdsourcing or conducted research with human subjects.} 
    \item Did you describe any potential participant risks, with links to Institutional Review Board (IRB) approvals, if applicable?
    \answerNA{We did not use crowdsourcing or conducted research with human subjects.} 
    \item Did you include the estimated hourly wage paid to participants and the total amount spent on participant compensation?
    \answerNA{We did not use crowdsourcing or conducted research with human subjects.} 
\end{enumerate}

\end{enumerate}


\newpage
\appendix

\doparttoc
\faketableofcontents
\part*{Appendix}
\parttoc

\section{Appendix}

\subsection{More details on related work}\label{app:related-work}
Optimal transport and in particular the Wasserstein distance has found various applications in \gls{acr:rl}, despite the computational challenge raised by its evaluation.

\paragraph{Optimal transport for \gls{acr:po}.}
In~\cite{Richemond2017}, the authors established a connection between policy gradient in Wasserstein trust regions and variational optimal transport, suggesting to solve the Fokker-Planck equations and to use diffusion processes.
Concurrently, \cite{zhang-et-al2018} formulated the~\gls{acr:po} problem as a gradient descent flow on the space of probability measures using Wasserstein Gradient Flows~\cite{Ambrosio2008} which is then solved approximately via particles.

\paragraph{\gls{acr:bgpg}~\cite{Pacchiano2020}.}
In~\cite{Pacchiano2020}, the authors proposed to replace the \gls*{acr:kl} divergence trust region from \gls*{acr:trpo}~\cite{Schulman2015} by a Wasserstein distance penalty in a behavioral space; the proposed approach embeds the policies in a latent behavioral space via a map acting on the trajectories induced by the policies, and leverages the Wasserstein distance to compare two embeddings. To approximate the Wasserstein distance, they exploit the dual formulation of the entropy-regularized Wasserstein distance. 
Although our alternating procedure in~\cref{sec:practical-ottrpo} may formally resemble the approach of~\cite{Pacchiano2020} in spirit, our approach (i) builds on the idea of trust regions and (ii) does not consider policy embeddings or entropy regularization. Not surprisingly, our~\cref{algo:policy-update} is fundamentally different from Algorithms 1 and 3 in \cite{Pacchiano2020}. If \gls{acr:bgpg} can be seen as a variant of \gls{acr:trpo} with a Wasserstein regularization, we propose a novel distinct variant of OT-based \gls{acr:trpo} fully based on trust regions. 

\paragraph{\gls{acr:wnpg}~\cite{Moskovitz2020}.}
While \cite{Pacchiano2020} proposed to use the Wasserstein distance as a global penalty to the objective, \cite{Moskovitz2020} further suggested to incorporate additional information about the local behavior of policies encapsulated in the so-called Wasserstein Information Matrix. They proposed accordingly \gls{acr:wnpg} to speed up policy optimization using a Wasserstein natural gradient.
The cornerstone of this algorithm is the estimation of a Wasserstein natural gradient stemming from a second-order expansion of the 2-Wasserstein distance between two (parametric) behavioral embedding distributions of two parameterized policies.    

\paragraph{\gls{acr:wpo}~\cite{Song2022}.}
The closest related work to ours is the recent paper~\cite{Song2022} which studied~\gls{acr:po} with Wasserstein distance and Sinkhorn divergence-based trust regions for discrete (and finite) action spaces. In contrast to~\cite{Song2022}, the present work addresses the general setting of compact subsets of Polish spaces encompassing the cases of continuous and discrete state-action spaces as particular cases. While we also adopt a duality approach, our level of generality induces many challenges compared to the discrete action space setting (see~\cref{rem:comparison-song-1}).
Moreover, the methods proposed in~\cite{Song2022} do not allow to restrict the set of policies to a particular family of distributions, which might lead to very large models.
Conversely, the novel algorithm we propose can handle policy parametrization (including direct parametrization). We refer the reader to~\cref{sec:ot-trpo} for more detailed comparisons with~\cite{Song2022}.

\subsection{Comments on ``mass splitting''} \label{appendix:mass-splitting}

In this section, we elaborate on the concept of ``mass splitting''. Specifically, we propose two examples showing that splitting the probability mass is in general required to construct optimal policy updates.

\begin{example}\label{example:wrong}
Let $\statespace = \{s_1\}$, $\actionspace = \{a_1, a_2\}$, and~$\varepsilon\in(0,1)$. Suppose $c(a_1,a_2) = 1,c(a_1,a_1)=c(a_2,a_2)=0$, $\pi(\cdot|s_1) = \diracdelta{a_1}$, $A^{\pi}(s_1,a_1) = 0$ and~$A^{\pi}(s_1,a_2) = 1$. Clearly, it is optimal to assigns as much probability mass as possible to action $a_2$; i.e., $\tilde{\pi}(\cdot|s_1) = (1-\varepsilon)\diracdelta{a_1}+\varepsilon\diracdelta{a_2}$, which corresponds to $t^\ast = 1-\varepsilon$, $\actionsamplenewlower{1}{1} = a_1$ and $\actionsamplenewlower{2}{1} = a_2$ using the notation of Corollary~\ref{cor:dual-discrete}.
However, if an optimal policy was to be described by a single transport map (as in~\cite[ Theorem~1, Eq.~(5), p.~4]{Song2022}), the only possible solutions would be~$\pi_1(\cdot|s_1) = \diracdelta{a_1}$ and~$\pi_2(\cdot|s_1) = \diracdelta{a_2}$, which are respectively sub-optimal and infeasible.
\end{example}

\begin{example}\label{ex:mass-split}
Consider an agent in an initial state $s_0$ who can move left (L) or right (R). The rewards are $r(s_0,\mathrm L)=-1$, and $r(s_0,\mathrm R)=+1$, and the task terminates whenever the agents reaches $s_1$ or at $s_2$, as shown below.
\begin{center}
\begin{tikzpicture}
    \node[circle,draw,thick] at (0,0) (s0) {$s_0$};
    \node[circle,draw,thick] at (3,0) (s1) {$s_1$};
    \node[circle,draw,thick] at (-3,0) (s2) {$s_2$};
    
    \draw[->,thick] (s0) to[out=35,in=145] (s1);
    \node at (1.5,0.3) {\footnotesize $\pi(\mathrm R|s_0)$};
    \node at (1.5,0.85) {\footnotesize $r(\mathrm R,s_0)=+1$};
    
    \draw[->,thick] (s0) to[out=145,in=35] (s2);
    \node at (-1.5,0.3) {\footnotesize $\pi(\mathrm L|s_0)$};
    \node at (-1.5,0.85) {\footnotesize $r(\mathrm L,s_0)=-1$};
\end{tikzpicture}
\end{center}
Consider the initial policy $\pi_0(\mathrm{L}|s_0)=1$ and $\pi_0(\mathrm{R}|s_0)=0$, and trust region defined by the binary distance, and let $\varepsilon\in(0,1)$.
Then, \cref{cor:dual-discrete} yields the new policy $\pi_1(\mathrm L|s)=1-\varepsilon$ and $\pi_1(\mathrm R|s)=\varepsilon$. Note that this update requires ``mass splitting''. A second update yields $\pi_2(\mathrm L|s)=1-2\varepsilon$ and $\pi_2(\mathrm R|s)=2\varepsilon$, and after $1/\varepsilon$ the routine converges to the optimal policy $\pi^\ast(\mathrm L|s)=0$ and $\pi^\ast(\mathrm R|s)=1$.
Conversely, the update of~\cite{Song2022} (i.e., \emph{without} ``mass splitting'') leads to either $\pi_1=\pi_0$, which is suboptimal, or to $\pi_1=\pi^\ast$, which violates the trust region constraint. 
\end{example}

\subsection{Implementation details}\label{app:implementation}
In this section, we present the implementation details supporting our experimental results. 

\subsubsection{\texorpdfstring{\gls{acr:ottrpo}}{Proposed algorithm} details}
We now discuss the details concerning the implementation and tuning of our algorithm. Our code,  along with the instructions to set up the environment and run it, is available at \url{https://gitlab.ethz.ch/lnicolas/ot-trpo}.

\paragraph{Discrete settings.}
For the discrete settings, we developed a custom training and testing framework. We used TD-learning to estimate the advantage function, with learning rate $\alpha$ and discount factor $\gamma$. Namely, we first collect a set $\mathcal{T}$ of trajectory. Then, we initialize $Q(s,a) = 0$ for all $(s,a) \in \statespace\times\actionspace$ and for all $\tau \in \mathcal{T}$ and every $(s_t, a_t, r_t, s_{tt}, a_{tt}) \in \tau$ we update $Q$ as
\[
Q(s_t, a_t) = (1 - \alpha) Q(s_t, a_t) + \alpha (r_t + \gamma Q(s_{tt}, a_{tt})).
\]
The transport cost used is the binary distance\footnote{As a result, the optimal transport discrepancy corresponds to the total variation between the probability distributions over the actions.} (i.e., $c(a,a')=0$ if $a=a'$ and $c(a,a')=1$ otherwise), and the trust region radius is $\varepsilon$. Every $n$ full environment simulations the policy is updated. The parameters for the different environments are reported in \cref{tab:discrete:hyperparameters}.

\begin{table}[H]
    \centering
    \begin{tabular}{l r r}
    \toprule
        Parameter & \textit{CliffWalking-v0} & \textit{Taxi-v3}\\\midrule
        $\alpha$ & $0.999999$ & $0.9$\\
        $\gamma$ & $0.2$ & $0.5$\\
        $\varepsilon$ & $0.01$ & $0.01$\\
        $n$ & $1$ & $32$\\\bottomrule
    \end{tabular}
    \caption{Hyperparameters for the discrete environments.}
    \label{tab:discrete:hyperparameters}
\end{table}

With respect to the computational complexity of the training process, one can assess the following about the main steps of \cref{algo:policy-update}:
\begin{enumerate}
    \item[\textbf{Step 3}.] The complexity of TD-learning is linear in the number $N$ of samples collected during the rollout of the current policy; i.e., $\mathcal{O}(N)$.
    \item[\textbf{Step 4}.] This step boils down to \texttt{scipy.optimize.minimize\_scalar}\footnote{\url{https://scipy.org}}. The complexity of each function call can be shown to be $\mathcal{O}(|\mathcal{S}||\mathcal{A}|^2)$.
    \item[\textbf{Step 5}.] The complexity of this step can be shown to be $\mathcal{O}(|\mathcal{S}||\mathcal{A}|^2)$.
\end{enumerate}

\paragraph{Continuous settings.}
For the continuous settings, we developed an agent which can be interfaced with stable baselines~\cite{Hill2018}. In the benchmark provided, we estimate the advantage function via the \gls{acr:gae}, with coefficient \texttt{gae\_lambda}. We use the policy and value network provided in the \texttt{ActorCriticPolicy} class from \cite{Hill2018}, with network sizes $[64, 64]$ and the activation \texttt{activation\_fn}.
Accordingly, the policy is a Gaussian policy with fixed standard deviation whose mean is expressed by the policy network. That is, we do not use a neural network to approximate the advantage function. We perform stochastic gradient descent in batches of size \texttt{batch\_size}, every \texttt{n\_steps} timesteps, for \texttt{n\_epochs} epochs, with learning rate \texttt{learning\_rate}, and with maximum gradient norm \texttt{max\_grad\_norm}. 
The value function loss is multiplied by \texttt{vf\_coef}. The transport cost is the square Euclidean distance\footnote{Note that the corresponding optimal transport discrepancy is not a distance (but itself a square distance).}, and the trust region radius is $\varepsilon$. Along the lines of the implementation of \gls{acr:ppo}~\cite{Schulman2017,Hill2018}, we introduce a Z-normalization of the advantage estimates if \texttt{normalize} is \texttt{True}, a state dependent exploration with sample frequency \texttt{sde\_sample\_freq} if this value is not $-1$, and an orthogonal initialization if \texttt{ortho\_init} is \texttt{True}. The logarithm of the standard deviation of the Gaussian policy network is initialized to \texttt{log\_std\_init}. The parameters for the different environments are reported in \cref{tab:continuous:hyperparameters}.

\begin{table}[H]
    \centering
    \begin{tabular}{l r r r r}
    \toprule
    Parameter & \textit{MountainCarCont.-v0} & \textit{Hopper-v3} & \textit{Swimmer-v3} & \textit{HalfCheetah-v3}\\\midrule 
    $\varepsilon$ & $8.9919$ & $0.4$ & $0.2$ & $0.0548$\\
    \texttt{n\_steps} & $512$ & $512$ & $1024$ & $1024$\\
    \texttt{batch\_size} & $256$ & $512$ & $64$ & $256$\\
    \texttt{n\_epochs} & $10$ & $10$ & $4$ & $20$\\
    \texttt{learning\_rate} & $0.0029$ & $0.0008$ & $0.0003$ & $0.0003$\\
    \texttt{max\_grad\_norm} & $0.7$ & $0.1$ & $0.5$ & $0.8$\\
    \texttt{activation\_fn} & \texttt{ReLU} & \texttt{Tanh} & \texttt{Tanh} & \texttt{LeakyReLU}\\
    \texttt{vf\_coef} & $0.6143$ & $0.6349$ &  $0.5$ & $0.0070$\\
    \texttt{gae\_lambda} & $0.95$ & $0.92$ & $0.98$ & $0.9$\\
    \texttt{gamma} & $0.999$ & $0.995$ & $0.999$ & $0.99$\\
    \texttt{normalize} & \texttt{True} & \texttt{True} & \texttt{False} & \texttt{True}\\
    \texttt{sde\_sample\_freq} & $128$ & $16$ & $-1$ & $128$\\
    \texttt{ortho\_init} & \texttt{True} & \texttt{True} & \texttt{True} & \texttt{False} \\
    \texttt{log\_std\_init} & $0.0$ & $-0.3619$ & $0.0$ & $-2.0291$\\\bottomrule
    \end{tabular}
    \caption{Hyperparameters and network sizes for the continuous environments. The values are rounded to the fourth decimal digit.}
    \label{tab:continuous:hyperparameters}
\end{table}

With respect to the computational complexity of the training process, one can assess the following about the main steps of \cref{algo:policy-update}:
\begin{enumerate}
    \item[\textbf{Step 3}.] The complexity of using a \gls{acr:gae} is linear in the number $N$ of samples collected during the rollout of the current policy; i.e., $\mathcal{O}(N)$.
    \item[\textbf{Step 4}.] This step boils down to \texttt{scipy.optimize.minimize\_scalar}. The complexity of each function call can be shown to be $\mathcal{O}(N)$.
    \item[\textbf{Step 5}.] This corresponds to a gradient descent step. Therefore, the computational costs are in line with the algorithms to which we compare. 
\end{enumerate}

\subsubsection{Compared algorithms details}
We compare our algorithm to several baselines, which we implemented and tuned as follows: 
\begin{itemize}[leftmargin = *]
    \item The implementation, hyperparameters, and network sizes for \gls{acr:trpo}~\cite{Schulman2015}, \gls{acr:ppo}~\cite{Schulman2017}, and A2C~\cite{Mnih2016} can be found in stable baselines~\cite{Hill2018}.
    \item We thank the authors of~\cite{Pacchiano2020} for privately providing us the code and tuning for \gls{acr:bgpg}.
    \item For \gls{acr:wnpg}~\cite{Moskovitz2020}, we used the code and tuning publicly available at \url{https://github.com/tedmoskovitz/WNPG}.
    \item We thank the authors of~\cite{Song2022} for pointing us to their implementation (with tuning parameters) of \gls{acr:wpo}, available at \url{https://github.com/efficientwpo/EfficientWPO}.
    This code has inspired in some parts our training and testing framework. Their primal update has been ported in our code to properly compare the only difference in the algorithms (in discrete settings). We use the same parameters for both \gls{acr:wpo} and \gls{acr:ottrpo}.
\end{itemize}
To the best of our knowledge, we used the hyperparameters and network sizes with the best performances for every algorithm in every environment. The results obtained for the baselines considered are comparable or better to the ones found in the literature; e.g., see \cite{Song2022}. We refrained from reporting the results of the algorithms whenever they were not reproducible or not comparable to the others.

For \gls{acr:bgpg}~\cite{Pacchiano2020} and \gls{acr:wnpg}~\cite{Moskovitz2020}, we used \texttt{tensorflow.compat} to ensure compatibility of their code (written with TensorFlow 1.x) with our installation of TensorFlow 2.x.
The code reproduces the results reported in these works, but the comparisons provided in this paper refer to the updated environments; e.g., \textit{Hopper-v3} instead of \textit{Hopper-v2}.

Finally, to the best of our knowledge, the timescale of the results obtained with the code in \url{https://github.com/efficientwpo/EfficientWPO} and the results ported in our work might differ. In the former, the evaluation is performed after a certain number of episodes (full environment simulation), rather than timesteps, and the conversion is not clear from the code (as every episode might differ in length). We instead adhere to the standard provided in~\cite{Hill2018}.

\subsection{Ablation study}\label{app:ablation}
We investigate how different (i) trust region radii $\varepsilon$ and (ii) transportation costs, affect the learning performances in the \emph{Taxi-v3} environment. These effects are shown in \cref{fig:experiments:ablation}.

\newcommand{\colorEpsLarge}{orange}
\newcommand{\colorEpsVeryLarge}{red}
\newcommand{\colorEpsSmall}{grgreen}
\newcommand{\colorEpsRef}{blue}
\newcommand{\colorEpsVerySmall}{magenta}
\newcommand{\colorCostExpensive}{orange}
\newcommand{\colorCostVeryExpensive}{red}
\newcommand{\colorCostCheap}{grgreen}
\newcommand{\colorCostVeryCheap}{magenta}
\newcommand{\colorCostRef}{blue}

\begin{figure}[h]
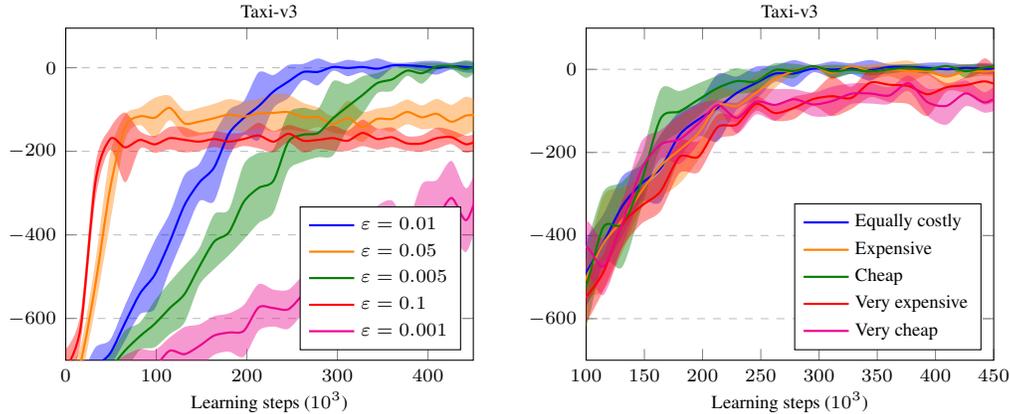

    \pgfplotstableread[col sep = comma]{figures/Ablation/Taxi/Epsilon/Large.csv}\EpsLarge
    \pgfplotstableread[col sep = comma]{figures/Ablation/Taxi/Epsilon/VeryLarge.csv}\EpsVeryLarge
    \pgfplotstableread[col sep = comma]{figures/Ablation/Taxi/Epsilon/Small.csv}\EpsSmall
    \pgfplotstableread[col sep = comma]{figures/Ablation/Taxi/Epsilon/VerySmall.csv}\EpsVerySmall
    \pgfplotstableread[col sep = comma]{figures/Ablation/Taxi/Epsilon/Ref.csv}\EpsRef
    \pgfplotstableread[col sep = comma]{figures/Ablation/Taxi/Cost/Expensive.csv}\CostExpensive
    \pgfplotstableread[col sep = comma]{figures/Ablation/Taxi/Cost/VeryExpensive.csv}\CostVeryExpensive
    \pgfplotstableread[col sep = comma]{figures/Ablation/Taxi/Cost/Cheap.csv}\CostCheap
    \pgfplotstableread[col sep = comma]{figures/Ablation/Taxi/Cost/VeryCheap.csv}\CostVeryCheap
    \pgfplotstableread[col sep = comma]{figures/Ablation/Taxi/Cost/Ref.csv}\CostRef
    \centering
    \begin{tikzpicture}
    \begin{groupplot}[group style={group size=2 by 1,
                               horizontal sep=1.5cm,
                               vertical sep=1.2cm,
                               group name=myplot},
                  width=7cm,
                  height=6cm,]
    \input{figures/Ablation/Taxi/Epsilon/tikz.tex}
    \input{figures/Ablation/Taxi/Cost/tikz.tex}
    \end{groupplot}
    \end{tikzpicture}
    \caption{Ablation study. The shaded area represents the mean $\pm$ the standard deviation across $10$ independent runs. Every policy evaluation in each run is averaged over $10$ sampled trajectories.}
    \label{fig:experiments:ablation}
\end{figure}

\paragraph{Different trust region radii $\varepsilon$.}
As expected, larger radii lead to significantly steeper learning curves, at the expense of converging to suboptimal policies. Conversely, a smaller radius results in slower learning trajectories. Our trust region radius $\varepsilon=0.01$ seems to balance the two effects.

\paragraph{Different transportation costs.}
The \textit{Taxi-v3} environment is characterized by two radically different set of actions: $\mathrm{Move} = \{\mathrm{Up}, \mathrm{Left}, \mathrm{Right}, \mathrm{Down}\}$ and $\mathrm{Passenger} = \{\mathrm{PickUp}, \mathrm{DropOff}\}$.
We use this insight to provide a simple example of how transport costs can affect the geometry of the action space and in turn affect the learning process. 
Specifically, we consider different cost functions. Let $m,m' \in \mathrm{Move}, p,p' \in \mathrm{Passenger}$. Consider now the following transport costs (for all of them we set $c(x, x) = 0 \ \forall x \in \mathrm{Move} \cup \mathrm{Passenger}$): 
\begin{itemize}[leftmargin = *]
    \item[] \textit{Equally costly.} In this cost term, $c(m,m') = c(m',m) = c(p,p') = c(p',p) = c(m,p) = c(p,m)$. This is also a distance.
    \item[] \textit{Cheap [Expensive].} In this cost term, $c(m,m') = c(m',m)$, $c(p,p') = c(p',p)$, but $c(m,p) > c(p,m)$ [$c(m,p) < c(p,m)$]. In particular, $c(p,m) = c(m,p) - 0.2$ [$c(p,m) = c(m,p) + 0.2$]. Since it is not symmetric, this is not a distance.
    \item[] \textit{Very cheap [expensive].} In this cost term, $c(m,m') = c(m',m)$, $c(p,p') = c(p',p)$, but $c(m,p) \gg c(p,m)$ [$c(m,p) \ll c(p,m)$]. In particular, $c(p,m) = c(m,p) - 0.5$ [$c(p,m) = c(m,p) + 0.5$]. Since it is not symmetric, this is not a distance.
\end{itemize}

Notice that a ``local minimum'' in the \textit{Taxi-v3} environment is to never play actions in the $\mathrm{Passenger}$ set: such a strategy yields a score of $-200$. Indeed, performing a $\mathrm{DropOff}$ leads to large penalty when not at the right location. As a result, carrying a passenger is ``risky''. However, these are also the actions that are required to cross the $-200$ bar and learn the optimal policy. We can interpret the aforementioned costs as follows: 
\begin{itemize}[leftmargin = *]
    \item[] \textit{Equally costly.} This cost function ignores the distinction between the two classes of actions: moving from one to the other is equally costly.
    \item[] \textit{Cheap [Expensive].} Moving mass towards ``risky'' actions is incentivized [penalized]. This cost function shapes the landscape of the action space so that there is a downhill [uphill] from $\mathrm{Move}$ to $\mathrm{Passenger}$, and withing these sets the geometry is analogous to the \textit{Equally costly} one.
    \item[] \textit{Very cheap [expensive].} Analogous to \textit{Cheap} [\textit{Expensive}].
\end{itemize}

We observe that with \textit{Very cheap} [\textit{expensive}] the agent learns apparently worse than with the others. Intuitively, this is due to the fact that the \textit{Very cheap} cost incentivizes too much the agent to drop off the passenger; as a result, it completes the task, but it also incurs many illegal $\mathrm{DropOff}$s, which yield the penalty. Oppositely, the \textit{Very expensive} cost is too conservative, and the agent learns to perform too few $\mathrm{DropOff}$s, being unable to complete the task correctly consistently.

On the other hand, the \textit{Cheap} [\textit{Expensive}] cost trains the agent slightly faster [slower] than the \textit{Equally costly} option. Namely, these costs balance in a different way the trade-off between exploration and exploitation.

This example highlights the impact of the geometry of the action space, which, in turn, is another tuning parameter available to practitioners. For instance, one could consider transport costs which embed an exploration penalties (e.g., moving mass from actions that are already rarely used or to others that are almost always used can be penalized), or physical considerations (e.g., an expert supervisor might know that certain actions are most likely the correct ones given certain configurations of the agent).

\paragraph{Sensitivity to the advantage estimation}
Both the dual problem~\eqref{pb:dual} and the ``regularized'' policy loss rely on the accuracy of the advantage function estimation, as well as on the solution of a maximization problem over a continuous space, which is approximated by finitely many evaluated points.
That is, when training on continuous actions spaces, we rely on the approximation
\begin{equation}
    \max_{a' \in \actionspace} \{\advantage(s,a') - \lambda c(a,a')\} \approx \max_{a' \in \actionspace'} \{\hat{\advantage}(s,a') - \lambda c(a,a')\},
\end{equation}
where $\actionspace' \subset \actionspace$ is a (possibly state-dependent) finite set of actions.
While smoothness of the objective guarantees convergence as $|\actionspace'|\rightarrow \infty$ with uniformly chosen actions in $\actionspace$, we empirically investigate the sensitivity of the method to various practical advantage estimation schemes.

In the experimental results so far presented, we use a \gls*{acr:gae} based estimate of the advantage function. 
In continuous action spaces, we explore a neural network approximation for the advantage function as well. 
Overall, this approach does not perform comparably to the single sample estimation of the objective with \gls*{acr:gae}. 
The training performances in the \textit{Hopper-v3} environment for different cardinalities of $\actionspace'$ are juxtaposed in \cref{fig:experiments:advantage-estimation:ablation-n-samples}, and the trained agents scores are summarized in \cref{tab:experiments:advantage-estimation:ablation-n-samples}. In \cref{fig:experiments:advantage-estimation} we compare the performances of the neural network approximation of the advantage function with the \gls*{acr:gae} in the environments \textit{HalfCheetah-v3}, \textit{Hopper-v3}, and \textit{MountainCarContinuous-v0}. The number of sampled actions is $|\actionspace'| = 4$, $|\actionspace'| = 2$, and $|\actionspace'| = 16$, respectively.

\newcommand{\colorGAE}{orange}
\newcommand{\colorNNTwo}{blue}
\newcommand{\colorNNFour}{red}
\newcommand{\colorNNEight}{grgreen}
\newcommand{\colorNNSixteen}{blue}
\newcommand{\colorNNThirtytwo}{magenta}

\begin{figure}[h]
    \pgfplotstableread[col sep = comma]{figures/Hopper/WTRPO.csv}\HopperAdvGAE
    \pgfplotstableread[col sep = comma]{figures/Ablation/Advantage/nActionsSamples/2.csv}\HopperAdvTwo
    \pgfplotstableread[col sep = comma]{figures/Ablation/Advantage/nActionsSamples/4.csv}\HopperAdvFour
    \pgfplotstableread[col sep = comma]{figures/Ablation/Advantage/nActionsSamples/8.csv}\HopperAdvEight
    \pgfplotstableread[col sep = comma]{figures/Ablation/Advantage/nActionsSamples/16.csv}\HopperAdvSixteen
    \pgfplotstableread[col sep = comma]{figures/Ablation/Advantage/nActionsSamples/32.csv}\HopperAdvThirtytwo
    \centering
    \begin{tikzpicture}
    \begin{groupplot}[group style={group size=1 by 1,
                              horizontal sep=1.5cm,
                              vertical sep=1.2cm,
                              group name=myplot},
                  width=10cm,
                  height=6cm,]
    \input{figures/Ablation/Advantage/nActionsSamples/tikz.tex}
    \end{groupplot}
    \end{tikzpicture}
    \caption{Ablation study. The advantage function is estimated either via \gls*{acr:gae} or via a neural network approximation ``NN ($|\actionspace'|$)''. The shaded area represents the mean $\pm$ the standard deviation across $10$ independent runs. Every policy evaluation in each run is averaged over $10$ sampled trajectories.}
    \label{fig:experiments:advantage-estimation:ablation-n-samples}
\end{figure}

\begin{table}[h]
    \centering
    \begin{tabular}{l r}
    \toprule
    Advantage estimator method & Trained agent scores \\\midrule    
   \gls*{acr:gae}  & $2939 \pm 162$ \\
    NN $(2)$     & $ 327 \pm 96 $ \\
    NN $(4)$     & $ 305 \pm 72 $ \\
    NN $(8)$     & $ 452 \pm 136 $\\
    NN $(16)$     & $ 387 \pm 94 $ \\
    NN $(32)$     & $ 497 \pm 150 $ \\\bottomrule
    \end{tabular}
    \caption{Trained agents scores when the advantage function is estimated either via \gls*{acr:gae} or via a neural network approximation ``NN ($|\actionspace'|$)''.}
    \label{tab:experiments:advantage-estimation:ablation-n-samples}
\end{table}

\begin{figure}[h]
    \pgfplotstableread[col sep = comma]{figures/Hopper/WTRPO.csv}\HopperAdvGAE
    \pgfplotstableread[col sep = comma]{figures/Ablation/Advantage/Hopper.csv}\HopperAdvNN
    \pgfplotstableread[col sep = comma]{figures/HalfCheetah/WTRPO.csv}\HalfCheetahAdvGAE
    \pgfplotstableread[col sep = comma]{figures/Ablation/Advantage/HalfCheetah.csv}\HalfCheetahAdvNN
    \pgfplotstableread[col sep = comma]{figures/MountainCarContinuous/WTRPO.csv}\MountainCarAdvGAE
    \pgfplotstableread[col sep = comma]{figures/Ablation/Advantage/MountainCarContinuous.csv}\MountainCarAdvNN
    \centering
    \begin{tikzpicture}
    \begin{groupplot}[group style={group size=3 by 1,
                              horizontal sep=1.5cm,
                              vertical sep=1.2cm,
                              group name=myplot},
                  width=4.5cm,
                  height=4.3cm,]
    \input{figures/Ablation/Advantage/MountainCarContinuousTikz.tex}
    \input{figures/Ablation/Advantage/HopperTikz.tex}
    \input{figures/Ablation/Advantage/HalfCheetahTikz.tex}
    \end{groupplot}
    \end{tikzpicture}
    \begin{tikzpicture}
\matrix [draw, column sep=15] {
  \node [fill=\colorGAE,minimum width=2,label=right:GAE] {};
  &
  \node [fill=\colorNNTwo,minimum width=2,label=right:NN] {};\\
};
\end{tikzpicture}
    \vspace{-.2cm}
    \caption{Ablation study. The advantage function is estimated either via \gls*{acr:gae} or via a neural network approximation. The shaded area represents the mean $\pm$ the standard deviation across $10$ independent runs. Every policy evaluation in each run is averaged over $10$ sampled trajectories.}
    \label{fig:experiments:advantage-estimation}
\end{figure}

In general, we observe that (i) simultaneous training of an advantage network and the \gls{acr:ottrpo} policy update could be unstable, and (ii) in the case of convergence, \gls{acr:ottrpo} with neural advantage function approximation converges to a suboptimal policy.
We hypothesize that the \gls{acr:ottrpo} objective is sensitive to biased value estimates in the neighborhood of the mean of the Gaussian policy network.
It will be subject of future research if and to what extent these results can be improved, for instance via normalizing flow policy parametrizations~\cite{ward2019improving}.

To support our observation regarding the failure mode of biased advantage estimates, we provide an ablation study for different $\lambda$ parameters for the \gls*{acr:gae}. The training curves are shown in \cref{fig:experiments:ablation:lambda-gae}.

\newcommand{\colorLambdaZeroFive}{red}
\newcommand{\colorLambdaZeroNine}{blue}
\newcommand{\colorLambdaOne}{orange}
\begin{figure}
    \centering
    \pgfplotstableread[col sep = comma]{figures/Hopper/WTRPO.csv}\HopperOne
    \pgfplotstableread[col sep = comma]{figures/Ablation/Advantage/lambdaGAE/lambda05.csv}\HopperZeroFive
    \pgfplotstableread[col sep = comma]{figures/Ablation/Advantage/lambdaGAE/lambda09.csv}\HopperZeroNine
    \centering
    \begin{tikzpicture}
    \begin{groupplot}[group style={group size=1 by 1,
                              horizontal sep=1.5cm,
                              vertical sep=1.2cm,
                              group name=myplot},
                  width=10cm,
                  height=6cm,]
    \input{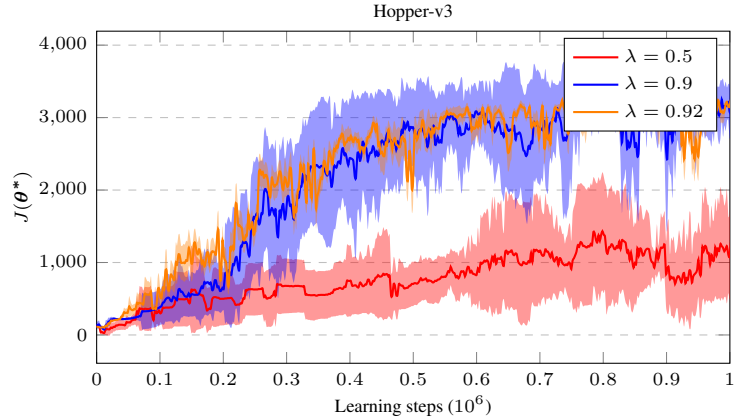}
    \end{groupplot}
    \end{tikzpicture}
    \caption{Ablation study on the $\lambda$ hyperparameter of the \gls*{acr:gae}. The shaded area represents the mean $\pm$ the standard deviation across $10$ independent runs. Every policy evaluation in each run is averaged over $10$ sampled trajectories.}
    \label{fig:experiments:ablation:lambda-gae}
\end{figure}

It is known that for $\lambda=1$, the \gls*{acr:gae} is equivalent to an unbiased Monte Carlo estimate of the advantage~\cite{Schulman2016}.
The observation that state-of-the-art performances are achieved with a \texttt{gae\_lambda} hyperparameter very close to $1$ (i.e., an unbiased Monte Carlo estimate) supports the idea that approximating the maximization with a biased advantage estimator is prone to instability.

\subsection{More details on the policy update}
\label{app:details-policy-update}
We shall now discuss how the implementation affects the exactness guarantees of \cref{thm:dual}. Indeed, albeit theoretically the algorithm presented guarantees an optimal policy update at every step, the practical implementation might resort to approximations. We consider as running example the simple setting in \cref{ex:mass-split}, where all the computations can be carried out analytically. We consider a generic current policy with $\pi_{\theta}(L|s_0) = \theta, \pi_{\theta}(R|s_0) = 1 - \theta$. 

For this example, we can pedagogically solve both \eqref{pb:primal} and \eqref{pb:dual}. The $Q$-function is $Q^{\pi_{\theta}}(s_0, L) = -1, Q^{\pi_{\theta}}(s_0, R) = 1$. Then, the value function at $s_0$ can be expressed in terms of the current policy as 
\[
V^{\pi_{\theta}}(s_0) = \theta Q^{\pi_{\theta}}(s_0, L) + (1 - \theta) Q^{\pi_{\theta}}(s_0, R) = 1 - 2\theta,
\]
and the advantage reads
\[
A^{\pi_{\theta}}(s_0, L) = Q^{\pi_{\theta}}(s_0, L) - V^{\pi_{\theta}}(s_0) = 2(\theta - 1), \quad A^{\pi_{\theta}}(s_0, R) = Q^{\pi_{\theta}}(s_0, R) - V^\pi(s_0) = 2\theta.
\]
Trivially, the visitation frequency is fully described by $\rho_{\pi_{\theta}}(s_0) = 1$; we can ignore the terminal states as they do not affect the problem. We consider the binary transportation cost $c(L, R) = c(R, L) = 1, c(L,L) = c(R,R) = 0$, and we first solve \eqref{pb:primal}. By direct inspection, it is easy to see that there are two cases. Let the new policy be $\pi_{\tilde{\theta}}(L|s_0) = \tilde{\theta}, \pi_{\tilde{\theta}}(R|s_0) = 1 - \tilde{\theta}$. If $\theta < \varepsilon$, then the solution is to move all the mass from action $L$ to action $R$: $\tilde{\theta} = 0$. Otherwise, we can move at most $\varepsilon$ mass, and this is the optimal solution: $\tilde{\theta} = \theta - \varepsilon$.

Second, we solve \eqref{pb:dual}:
\[
\begin{aligned}
\lambda^\ast 
&= \argmin_{\lambda \geq 0} \lambda\varepsilon + \int_{\statespace}\int_{\actionspace} \max_{a' \in \actionspace} \left\{A^{\pi_{\theta}}(s, a') - \lambda c(a, a')\right\}\d\pi_{\theta}(a|s)\d\visitationfreq(s)\\
&= \argmin_{\lambda \geq 0} \lambda\varepsilon + \theta\max\left\{A^{\pi_{\theta}}(s_0, L), A^{\pi_{\theta}}(s_0, R) - \lambda\right\} + (1 - \theta) \max\left\{A^{\pi_{\theta}}(s_0, L) - \lambda, A^{\pi_{\theta}}(s_0, R)\right\}\\
&= \argmin_{\lambda \geq 0} \lambda\varepsilon + \theta\underbrace{\max\left\{2(\theta - 1), 2\theta - \lambda\right\}}_{=2\theta - \min\left\{2, \lambda\right\}} + (1 - \theta) \underbrace{\max\left\{2(\theta - 1) - \lambda, 2\theta\right\}}_{=2\theta}\\
&= \argmin_{\lambda \geq 0} \lambda\varepsilon - \theta\min\left\{2, \lambda\right\},
\end{aligned}
\]
which yields $\lambda^\ast = 0$ if $\theta < \varepsilon$, $\lambda^\ast = 2$ if $\theta \geq \varepsilon$. As expected, $\lambda^\ast$ is not zero if the constraint is active, which is the case every time there is more mass in $\pi(L|s_0)$ than what we are allowed to move. In practice, we can solve this optimization problem to global optimality (with sufficient numerical precision). In discrete settings, the numerical approximations introduced by the solver are the only source of approximations.

In the considered example, with finite state-action spaces, one should follow the first approach discussed for the parameter update; namely, one should update $\theta$ according to~\eqref{eq:opt-policy-discrete} and~\eqref{eq:bij*}. For the sake of providing insights and shading light on both the inner working of the proposed algorithm and its limitations, we now study all the proposed options for step 5 in \cref{algo:policy-update}.

\paragraph{Direct parametrization (finite spaces)}
We distinguish two cases: 
\begin{itemize}
    \item[$\theta > \varepsilon$.] The constraint is active: $\lambda^\ast = 2$. We obtain $\mins{\lambda^\ast}{s}{L} = \{L, R\}$ and $\mins{\lambda^\ast}{s}{R} = \{R\}$. The minimizers are not unique: we need ``mass splitting''. In particular, $\optmaplower{\lambda^*}{s_0}{L} = L$ (the ``closest'' minimizer is the one that not require any mass displacement), $\optmapupper{\lambda^*}{s_0}{L} = R$, and $\optmaplower{\lambda^*}{s_0}{R} = \optmapupper{\lambda^*}{s_0}{R} = R$. Finally, $t^\ast$ is selected to ``activate'' the constraint. In this case, $t^\ast = \varepsilon$, and we obtain $\tilde{\pi}(L|s_0) = \theta - \varepsilon \eqqcolon \tilde{\theta}$, as expected.
    \item[$\theta \leq \varepsilon$.] In this case, $\lambda^\ast = 0$. Then we have $\mins{\lambda^\ast}{s}{L} = \{R\}$ and $\mins{\lambda^\ast}{s}{R} = \{R\}$. That is, the minimizers are unique, the unique transport map $\optmaplower{\lambda^*}{s_0}{\cdot} = \optmapupper{\lambda^*}{s_0}{\cdot} = T$ is the constant map $T(L) = T(R) = R$ and we do not need ``mass splitting''. The new policy is then trivially $\tilde{\pi}(L|s_0) = \pushforward{T}{\pi}(L|s_0) = 0$; that is, $\tilde{\theta} = 0$, again as expected. 
\end{itemize}

\paragraph{Direct parametrization via policy network (continuous states, discrete actions)}
In this approach, we build on top of the result of the previous one, and we perform gradient descent on the loss $L(\theta)=\sum_{s\in\hat\statespace}\rho_{\pi_{\theta_t}}(s)\norm{\pi_\theta(\cdot|s)-\pi^\ast_{\theta_t}(\cdot|s)}^2$ to steer $\pi_{\theta}$. In particular, the minimization of $L$ steers the policy network towards the optimal policy. Hence, if we were to perform gradient descent until convergence, the policy update would be exact up to numerical precision.

\paragraph{Arbitrary policy parametrization (continuous states, continuous actions)}
Armed with $\lambda^\ast$, we construct the loss function
\[
L(\tilde{\theta}) = \sum_{s\in\hat\statespace}\int_{\actionspace}\max_{a'\in\actionspace}\{A^{\pi_{\theta}}(s,a')-\lambda^\ast c(a,a')\}\d\pi_{\tilde{\theta}}(a|s)\rho_{\pi_{\theta}}(s).
\]

In particular, $\tilde{\theta}$ affects only $\pi_{\tilde{\theta}}$. The intuition behind this loss function is to increase the probability mass where the regularized advantage function has its maxima. However, in general, the solutions are multiple, while the gradient descent will possibly converge to one. As such, one limitation of this cost function is related with the ``mass splitting'' issue. That is, if we were to perform gradient descent until convergence, the policy update might violate the trust-region constraint. Empirically, the proposed loss function performs well, and allows the deployment of optimal transport trust-region methods in continuous spaces. Moreover, the gradient descent is not iterated until convergence in practice.
Nonetheless, the arising optimization procedure deserves further study in terms of convergence and optimality guarantees, but our duality results provide an intuition on the method. Future work will focus on understanding how to implicitly describe transport \emph{plans}, rather than transport maps.

In the remainder of the discussion, we study the gradient for the running example, and we validate the intuition above: 
\[
\begin{aligned}
\nabla_{\tilde{\theta}}L(\tilde{\theta}) 
&= \nabla_{\tilde{\theta}}\left[\tilde{\theta}\max\{2(\theta - 1), 2\theta - \lambda^\ast\} + (1 - \tilde{\theta})2\theta\right]\\
&= \max\{2(\theta - 1), 2\theta - \lambda^\ast\} - 2\theta\\
&= -\min\{2, \lambda^\ast\}
\end{aligned}
\]

We consider the cases $\theta > \varepsilon$ and $\theta \leq \varepsilon$ separately: 
\begin{itemize}
    \item[$\theta > \varepsilon$.] The gradient reads $\nabla_{\tilde{\theta}}L(\tilde{\theta}) = -2$: we are steering $\theta$ to $0$. Namely, the loss is embedding the transport map $\optmapupper{\lambda^*}{s_0}{L} = R$, $\optmapupper{\lambda^*}{s_0}{R} = R$. Following the gradient corresponds to interpolating between $\pi_\theta(\cdot|s_0)$ and $\pushforward{\optmapupper{\lambda^*}{s_0}{\cdot}}{\pi_{\theta}(\cdot|s_0)}$. However, as previously discussed, the optimal solution requires ``mass splitting'': if we perform gradient descent until convergence, the policy update yields $\tilde{\theta} = 0$, which violates the trust-region constraint.
    \item[$\theta \geq \varepsilon$.] The gradient vanishes: $\nabla_{\tilde{\theta}}L(\tilde{\theta}) = -\lambda^\ast = 0$. This issue is related with the policy parametrization. The issue can be addressed with a decreasing trust-region radius $\varepsilon \to 0$. This highlights a potential drawback of the proposed practical implementation: albeit our theoretical results are valid and parametrization-independent, the choice of the network architecture affects the learning process. Aside from the possibility of a vanishing gradient, it is possible that the optimal policy is not contained in the family described by the parametrization.
\end{itemize}

\subsection{Further details on the experimental results}\label{app:details-experiments}

\subsubsection{Additional numerical details}
In addition to the experimental results of~\cref{section:experiments}, we provide the numerical results of the performance comparison among our algorithm and the baseline methods in~\cref{tab:trained:scores}. In particular, we average the expected scores obtained in the last $10\%$ of the training episodes.
\begin{table}[h]
\begin{center}
    \begin{tabular}{l r r r}
    \toprule
        Algorithm & \textit{CliffWalking-v0} & \textit{Taxi-v3} & \textit{MountainCarCont.-v0}\\\midrule
        \gls{acr:trpo}~\cite{Schulman2015} & $-568 \pm 0$ & $-51 \pm 11$ & $75 \pm 0$ \\
        \gls{acr:ppo}~\cite{Schulman2017} & $-5001 \pm 0$ & $-200 \pm 0$ & $-8 \pm 13$ \\\midrule
        A2C~\cite{Mnih2016} & $-512 \pm 0$ & $-200 \pm 0$ & $83 \pm 0$\\\midrule
        \gls{acr:wpo}~\cite{Song2022} & $-23 \pm 2$ & $-65 \pm 10$ & / \\
        \gls{acr:bgpg}~\cite{Pacchiano2020} & / & / & $90 \pm 3$\\
        \gls{acr:wnpg}~\cite{Moskovitz2020} & / & / & $88 \pm 5$ \\\midrule
        \bf \gls{acr:ottrpo} & $-14 \pm 0$ & $3 \pm 3$ & $88 \pm 6$\\\midrule
        \\
        \toprule
        Algorithm & \textit{Hopper-v3} & \textit{Swimmer-v3} & \textit{HalfCheetah-v3} \\\midrule
        \gls{acr:trpo}~\cite{Schulman2015} & $3130 \pm 230$ & $239 \pm 4$ & $2561 \pm 114$ \\
        \gls{acr:ppo}~\cite{Schulman2017} & $1388 \pm 127$ & $219 \pm 2$ & $5078 \pm 126$ \\\midrule
        A2C~\cite{Mnih2016} & $667 \pm 73$ & $24 \pm 8$ & $633 \pm 70$ \\\midrule
        \gls{acr:bgpg}~\cite{Pacchiano2020} & $1720 \pm 59$ & $37 \pm 0$ & $1309 \pm 17$\\
        \gls{acr:wnpg}~\cite{Moskovitz2020} & $1998 \pm 50$ & $40 \pm 0$ & $985 \pm 43$ \\\midrule
        \bf \gls{acr:ottrpo} & $2939 \pm 162$ & $359 \pm 2$ & $4818 \pm 4$\\\bottomrule
    \end{tabular}
    \caption{Averaged scores over last $10\%$ episodes of the training process.}\label{tab:trained:scores}
\end{center}
\end{table}

Moreover, in~\cref{tab:computationaltime} we report the training time in the Mujoco \cite{Todorov2012} environments, which are the most computationally demanding. 
The values in \cref{tab:computationaltime} have been collected under comparable conditions and similar hardware.
While the absolute values here provide little insight, the relative comparisons highlight that our algorithm has computational performances only slightly slower than \gls{acr:trpo}~\cite{Schulman2015}, \gls{acr:ppo}~\cite{Schulman2017}, and A2C~\cite{Mnih2016}. Instead, other methods levereging optiamal transport and, in particular, the Wasserstein distance (i.e., \gls{acr:bgpg}~\cite{Pacchiano2020} and \gls{acr:wnpg}~\cite{Moskovitz2020}) require a considerably longer training time.
\begin{table}[h]
    \centering
    \begin{tabular}{l r r r}
    \toprule
        Algorithm & \textit{Hopper-v3} & \textit{Swimmer-v3} & \textit{HalfCheetah-v3}\\\midrule 
        \gls{acr:trpo}~\cite{Schulman2015} & $5606 \pm 2542$ & $7882 \pm 963$ & $6719 \pm 3517$\\
        \gls{acr:ppo}~\cite{Schulman2017} & $9967 \pm 6702$ & $7875 \pm 1667$ & $10088 \pm 9966$\\
        A2C~\cite{Mnih2016} & $3190 \pm 1210$ & $7876 \pm 2332$ & $4295 \pm 172$ \\\midrule
        \gls{acr:bgpg}~\cite{Pacchiano2020} & $15009 \pm 64$ & $31402 \pm 1491$ & $50335 \pm 112$\\
        \gls{acr:wnpg}~\cite{Moskovitz2020} & $29941 \pm 12531$ & $44774 \pm 731$ & $29620 \pm 8687$ \\\midrule
        \bf \gls{acr:ottrpo} & $8031 \pm 931$ & $16903 \pm 2933$ & $11276 \pm 1410$ \\\bottomrule
    \end{tabular}
    \caption{Training time in seconds of the different algorithms in the Mujoco \cite{Todorov2012} environments.}
    \label{tab:computationaltime}
\end{table}

\subsubsection{Some considerations on convergence speed }
Finally, in some environments (e.g., \textit{Taxi-v3}, \textit{Hopper-v3}, and \textit{Swimmer-v3}), \gls{acr:trpo} seems to learn faster than the proposed algorithm (albeit not converging to a better solution). This is closely related to the choice of trust-region. With \gls{acr:ottrpo}, moving the ``probability mass'' is done at the cost $c(a_1, a_2)$. In \gls{acr:trpo}, there is no notion of transport cost in the action space: all the actions are at the same ``distance''. That is, they differ only based on the probability of using them when at a state $s \in \mathcal{\statespace}$; namely $\log{}(\pi(a_1|s) / \pi(a_2|s))$. Whenever $c(a_1, a_2) > \log{}(\pi(a_1|s) / \pi(a_2|s))$, the convergence is faster with \gls{acr:trpo} compared to \gls{acr:ottrpo}. Clearly, one could design a transport cost that speeds up the convergence (a ``smaller'' one), at the price of a potentially less stable and robust behavior of the algorithm. The study of the ``optimal'' choice of transport cost is indeed an interesting question for future research.
As an example, in the \textit{Taxi-v3} environment there are $6$ actions. The proposed method with a binary distance incurs a cost of $1$ to bring a state from a uniform probability distribution over the actions to a deterministic one (which for most states is the case in the optimal policy). Instead, \gls{acr:trpo} considers such policies only $\log{}(6) < 1$ away. Similar (but more complicated) calculations can be carried out for the Mujoco environments.
Another reason can be found in the lack of symmetry of the \gls{acr:kl} divergence. Such asymmetry can ``enforce'' a direction of exploration: going back to a previously explored point in the policy space might be more (or less) expensive. Oppositely, an optimal transport cost can be chosen to be symmetric (e.g., the Wasserstein distance and the costs used for our experiments).
When combined with errors in the estimation of the advantage function, it can result in oscillations in the policy space, which might slow down the convergence.

\section{Proofs}

\subsection{Proof of~\cref{thm:dual}}\label{appendix:strongduality}
We start with weak duality:
\begin{proposition}[Weak Duality]\label{proposition:weakduality}
Weak duality holds. Namely, $\primal \leq \dual$.
\end{proposition}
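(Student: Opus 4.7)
Weak duality will follow from a standard Lagrangian relaxation of the trust region constraint combined with the primal (coupling) formulation of the optimal transport discrepancy. Fix an arbitrary $\lambda \geq 0$ and any feasible $\tilde\pi \in \mathcal{T}_\varepsilon(\pi)$. Since $\lambda \geq 0$ and $\int_\statespace \otdiscrepancy{\pi(\cdot|s)}{\tilde\pi(\cdot|s)}\d\visitationfreq(s) \leq \varepsilon$, the trivial Lagrangian upper bound
\begin{equation*}
\int_\statespace \int_\actionspace \advantage(s,a')\d\tilde\pi(a'|s)\d\visitationfreq(s) \leq \lambda\varepsilon + \int_\statespace \Big[\int_\actionspace \advantage(s,a')\d\tilde\pi(a'|s) - \lambda\, \otdiscrepancy{\pi(\cdot|s)}{\tilde\pi(\cdot|s)}\Big]\d\visitationfreq(s)
\end{equation*}
holds, and it reduces the problem to bounding the bracketed integrand by $\int \regularizer_{\lambda}(s,a)\d\pi(a|s)$.

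The crux is therefore the pointwise estimate, for every $s \in \statespace$,
\begin{equation*}
\int_\actionspace \advantage(s,a')\d\tilde\pi(a'|s) - \lambda\, \otdiscrepancy{\pi(\cdot|s)}{\tilde\pi(\cdot|s)} \leq \int_\actionspace \regularizer_{\lambda}(s,a)\d\pi(a|s).
\end{equation*}
To prove it I would pick any $\gamma_s \in \Gamma(\pi(\cdot|s),\tilde\pi(\cdot|s))$ and use the marginal properties of $\gamma_s$ together with the definition of $\regularizer_{\lambda}$ to write
\begin{equation*}
\int \advantage(s,a')\d\tilde\pi(a'|s) - \lambda \int c(a,a')\d\gamma_s(a,a') = \int [\advantage(s,a') - \lambda c(a,a')]\d\gamma_s(a,a') \leq \int \regularizer_{\lambda}(s,a)\d\pi(a|s).
\end{equation*}
Since this holds for every $\gamma_s$, taking the infimum of $\int c\,\d\gamma_s$ over couplings (which equals $\otdiscrepancy{\pi(\cdot|s)}{\tilde\pi(\cdot|s)}$) and using $\lambda\geq 0$ yields the desired bound without the need for a measurable selection of optimal couplings.

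Integrating this pointwise inequality against $\visitationfreq$ and combining it with the Lagrangian bound gives $\int\!\int \advantage\,\d\tilde\pi\d\visitationfreq \leq \lambda\varepsilon + \int\!\int \regularizer_{\lambda}(s,a)\d\pi(a|s)\d\visitationfreq(s)$ for every $\lambda \geq 0$ and every feasible $\tilde\pi$; taking the supremum over $\tilde\pi$ and the infimum over $\lambda$ delivers $\primal \leq \dual$. The only mild subtlety is measurability in $s$ of $\otdiscrepancy{\pi(\cdot|s)}{\tilde\pi(\cdot|s)}$ and of $\regularizer_{\lambda}(s,a)$ so that the iterated integrals are well defined; under \cref{hyp:state-action-space,hyp:continuous-advantage} this is routine via Berge's maximum theorem for $\regularizer_{\lambda}$ and lower semi-continuity of the optimal transport cost in its marginals. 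In short, weak duality is essentially immediate; the substantive work -- attainment of the primal and dual optima and the converse inequality -- is reserved for the remainder of the proof of \cref{thm:dual}.
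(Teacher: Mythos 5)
Your proof is correct, but it takes a genuinely different route from the paper's. The paper also begins with the Lagrangian relaxation, but then invokes Kantorovich duality, writing $\otdiscrepancy{\pi(\cdot|s)}{\tilde\pi(\cdot|s)}$ as a supremum over potential pairs $\phi+\psi\leq c$ and plugging in the explicit (generally suboptimal) choice $\psi=\advantage(s,\cdot)/\lambda$ with $\phi$ its $c$-transform; after checking feasibility and boundedness of this pair, the $\int\psi\,\d\tilde\pi$ term cancels the advantage term and what remains is exactly $\int\regularizer_\lambda(s,a)\,\d\pi(a|s)$. You instead stay entirely on the primal side of the transport problem: for any coupling $\gamma_s\in\Gamma(\pi(\cdot|s),\tilde\pi(\cdot|s))$ the marginal conditions give $\int[\advantage(s,a')-\lambda c(a,a')]\d\gamma_s(a,a')\leq\int\regularizer_\lambda(s,a)\d\pi(a|s)$, and since this holds for every coupling it holds after replacing $\int c\,\d\gamma_s$ by its infimum, i.e.\ by $\otdiscrepancy{\pi(\cdot|s)}{\tilde\pi(\cdot|s)}$ (here the sign works out because you take the supremum of the left-hand side over $\gamma_s$, which is what turns the $\inf$ over couplings into the discrepancy). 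Your version is more elementary: it uses only the trivial pointwise bound $\advantage(s,a')-\lambda c(a,a')\leq\regularizer_\lambda(s,a)$ and the definition of the discrepancy as a minimum over couplings, avoids the full Kantorovich duality theorem, and sidesteps the need to verify that the chosen potentials are bounded and admissible (and the separate treatment of $\lambda=0$ that the paper's choice $\psi=\advantage/\lambda$ forces). The paper's potential-based argument, on the other hand, makes the connection to the DRO duality literature more visible. Your handling of the remaining measurability issues is also adequate: the measurability of $s\mapsto\otdiscrepancy{\pi(\cdot|s)}{\tilde\pi(\cdot|s)}$ is already presupposed by the formulation of the primal constraint, and the properties of $\regularizer_\lambda$ needed for the dual integrand are established in the paper's auxiliary lemmas exactly as you indicate.
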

\begin{proof}
By minimax we have 
\begin{align*}
    \primal &= \sup_{\tilde\pi \in \Pi} \inf_{\lambda \geq 0} \left\{ \int_\statespace\int_\actionspace \advantage(s,a) \d\tilde\pi(a|s)d\visitationfreq(s) + \lambda \left(\varepsilon - \int_\statespace\otdiscrepancy{\pi(\cdot|s)}{\tilde\pi(\cdot|s)}\d\rho(s)\right)\right\}
    \\
    &\leq
    \inf_{\lambda \geq 0} \left\{ \lambda \varepsilon + \sup_{\tilde\pi \in \Pi}\left\{\int_\statespace\int_\actionspace \advantage(s,a) \d\tilde\pi(a|s)-\lambda\otdiscrepancy{\pi(\cdot|s)}{\tilde\pi(\cdot|s)}\d\visitationfreq(s)\right\}\right\}
    \\
    &\leq
    \inf_{\lambda \geq 0} \left\{ \lambda \varepsilon + \int_\statespace \underbrace{\sup_{\tilde \pi(\cdot|s) \in \Pi}\left\{\int_\actionspace \advantage(s,a) \d\tilde\pi(a|s)-\lambda\otdiscrepancy{\pi(\cdot|s)}{\tilde\pi(\cdot|s)}\right\}}_{(\heartsuit)}\d\visitationfreq(s)\right\}.
\end{align*}
Second, for all $s\in\statespace{}$, Kantorovich duality \cite[Theorem 6.1.1]{Ambrosio2008} gives 
\begin{align*}
    (\heartsuit)
    &=
    \sup_{\pi(\cdot|s) \in \Pi}\left\{\int_\actionspace \advantage(s,a) \d\tilde\pi(a|s)-\lambda\sup_{\phi + \psi \leq c}\left\{\int_\actionspace\phi(a) \d\pi(a|s) + \int_\actionspace\psi(a) \d\tilde\pi(a|s)\right\}\right\}.
\end{align*}
Without loss of generality, we assume $\lambda>0$; else, the result is straightforward. Then, we choose
\begin{equation*}
\psi(\cdot) = \advantage(s,\cdot) / \lambda
\quad 
\text{and}
\quad 
\phi(\cdot) =\inf_{a'\in\actionspace}\left\{c(\cdot,a')-\psi(a')\right\}.
\end{equation*}
By \cref{hyp:state-action-space,hyp:continuous-advantage} $\phi,\psi$ are continuous on a compact space. Hence, they are also bounded \cite[Theorem 27.4]{Munkres2000}. Moreover, 
\begin{equation*}
    \phi(a_1)+\psi(a_2)
    =
    \inf_{a'\in\actionspace}\{c(a_1,a')-\psi(a')\} + \psi(a_2)
    \leq
    c(a_1,a_2) -\psi(a_2) + \psi(a_2)\leq c(a_1,a_2).
\end{equation*}
Thus, $(\phi,\psi)$ is a valid, possibly sub-optimal, choices for the supremum.
Overall, we get the upper bound
\begin{align*}
    (\heartsuit)
    &
    \leq
    \sup_{\pi(\cdot|s) \in \Pi}
    \left\{\int_{\actionspace}-\inf_{a'\in\actionspace}\left\{\lambda c(a,a')-\advantage(s,a')\right\}\d\pi(a|s)\right\}
    \\
    &=
    \int_{\actionspace}\sup_{a'\in\actionspace}\left\{\advantage(s,a')-\lambda c(a,a')\right\}\d\pi(a|s).
\end{align*}
Hence, we obtain
\begin{equation*}
    \primal
    \leq
    \inf_{\lambda \geq 0}\left\{ \lambda \varepsilon +\int_\statespace\int_{\actionspace}\sup_{a'\in\actionspace}\left\{\advantage(s,a')-\lambda c(a,a')\right\}\d\pi(a|s)\d\visitationfreq(s)\right\} = \dual.
    \qedhere 
\end{equation*}
\end{proof}

The proof of strong duality (i.e., equality) is more delicate, and requires some preliminary results. First, we recall the definition the regularization operator $\regularizer_\lambda$, which represents intuitively a ``regularized'' advantage function: 
\begin{equation*}
\begin{aligned}
    \regularizer_\lambda: \statespace\times\actionspace &\to\reals\\
    (s,a) &\mapsto\sup_{a'\in\actionspace}\left\{\advantage{}(s,a')-\lambda c(a,a')\right\}.
\end{aligned}
\end{equation*}
The regularization operator $\regularizer_\lambda$ is well defined by \cref{hyp:continuous-advantage,hyp:state-action-space}. Indeed, for all $\lambda \in \reals, s \in \statespace, a \in \actionspace$, $a' \mapsto \advantage{}(s,a')-\lambda c(a,a')$ is a continuous function on a compact space and thus it attains its maximum value (e.g., see \cite[Theorem 27.4]{Munkres2000}). Accordingly, we recall the set of maximizers
\[
\mins{\lambda}{s}{a}\coloneqq\argmax_{a' \in \actionspace}\left\{\advantage{}(s,a')-\lambda c(a,a')\right\}.
\]
Since minimizers are generally \emph{not} unique, $\mins{\lambda}{s}{a}$ is indeed a set. In fact, it is also closed: 

\begin{lemma}\label{proposition:minimizers:attained}
For all $\lambda \in \reals, s \in\statespace, a \in\actionspace$, the set $\mins{\lambda}{s}{a}\subset\actionspace$ is non-empty and closed (and thus measurable).
\end{lemma}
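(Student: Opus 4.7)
The plan is to derive both non-emptiness and closedness as direct consequences of the compactness of $\actionspace$ and of the joint continuity already guaranteed by~\cref{hyp:state-action-space,hyp:continuous-advantage}. Concretely, I would introduce, for fixed $\lambda\in\reals$, $s\in\statespace$, $a\in\actionspace$, the auxiliary function
\begin{equation*}
    f_{\lambda,s,a}: \actionspace \to \reals,
    \qquad
    a' \mapsto \advantage(s,a') - \lambda c(a,a'),
\end{equation*}
and observe that by~\cref{hyp:continuous-advantage} (continuity of $\advantage$ and of $c$) the map $f_{\lambda,s,a}$ is continuous on $\actionspace$. Since $\actionspace$ is compact by~\cref{hyp:state-action-space}, the extreme value theorem (e.g., \cite[Theorem 27.4]{Munkres2000}) ensures that $f_{\lambda,s,a}$ attains its supremum, so $\regularizer_\lambda(s,a)$ is finite and $\mins{\lambda}{s}{a}\neq\emptyset$. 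This already matches the argument used right before the statement of the lemma, so non-emptiness is immediate.

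For closedness, I would write
\begin{equation*}
    \mins{\lambda}{s}{a}
    =
    f_{\lambda,s,a}^{-1}\big(\{\regularizer_\lambda(s,a)\}\big),
\end{equation*}
i.e., the preimage of the singleton $\{\regularizer_\lambda(s,a)\}\subset\reals$ under $f_{\lambda,s,a}$. Since singletons are closed in $\reals$ and $f_{\lambda,s,a}$ is continuous, this preimage is closed in $\actionspace$. Measurability then follows at once, since closed subsets of a Polish space are Borel.

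\textbf{Main obstacle.} There is essentially no obstacle here: the result is a direct consequence of continuity plus compactness, and both hypotheses are already in force. The only thing to be mildly careful about is to state clearly that the continuity of $f_{\lambda,s,a}$ is with respect to the variable $a'$ alone (with $\lambda,s,a$ fixed), which is immediate from the joint continuity of $\advantage$ and $c$. Joint continuity/measurability in $(s,a)$ is \emph{not} needed for this lemma and is instead a delicate point deferred to the measurable-selection arguments used later in the proof of~\cref{thm:dual} and~\cref{cor:optimal-policy}.
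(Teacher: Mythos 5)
Your proof is correct and follows essentially the same route as the paper: non-emptiness via the extreme value theorem on the compact space $\actionspace$, and closedness from continuity of $a'\mapsto \advantage(s,a')-\lambda c(a,a')$. The only cosmetic difference is that you establish closedness by writing the argmax set as the preimage of the closed singleton $\{\regularizer_\lambda(s,a)\}$ under a continuous map, whereas the paper verifies the equivalent sequential characterization (limits of sequences in the set remain in the set); both hinge on the same continuity and yield the same conclusion.
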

\begin{proof}
Since $a' \mapsto \advantage{}(s,a')-\lambda c(a,a')$ is a continuous function on a compact space and thus it attains its maximum value (e.g., see \cite[Theorem 27.4]{Munkres2000}), which directly implies that $\mins{\lambda}{s}{a}$ is non-empty.
To prove that it is closed, it suffices to show that $\mins{\lambda}{s}{a}$ contains all its limit points (e.g., see~\cite[Corollary 17.7]{Munkres2000}). Let $(a_n)_{n \in \naturals}, a_n \in \mins{\lambda}{s}{a}$ be a sequence converging to $\bar{a}\in\actionspace$.
Let $h^{\lambda}_{s,a}: \actionspace \to \reals$ be defined as $h^{\lambda}_s(a) =\advantage{}(s,a')-\lambda c(a,a')$. Since $a_n \in \mins{\lambda}{s}{a}$, $h^{\lambda}_{s,a}(a_n) = \regularizer_{\lambda}(s,a)$ for all $n \in \naturals$. Moreover, $\advantage$ is continuous by \cref{hyp:continuous-advantage}, and thus so is $h$. By continuity,
\[
h^{\lambda}_{s,a}(\bar{a}) = \lim_{n\to\infty}h^{\lambda}_{s,a}(a_n) = \regularizer_{\lambda}(s,a),
\]
and so $\bar{a} \in \mins{\lambda}{s}{a}$.
Measurability of $\mins{\lambda}{s}{a}$ follows from closedness and $\pi(\cdot|s)$ being a Borel probability measure. 
\end{proof}

Among all possible minimizers, the \emph{closest} and \emph{furthest apart} will play an important role in the proof of strong duality. Thus, we define
\[
    \minsdistupper{\lambda}{s}{a}
    \coloneqq
    \max_{a' \in \mins{\lambda}{s}{a}} c(a,a'),
    \qquad
    \minsdistlower{\lambda}{s}{a}
    \coloneqq 
    \min_{a' \in \mins{\lambda}{s}{a}}c(a,a'),
\]
where the $\max$ and $\min$ are well defined by \cref{proposition:minimizers:attained}, since $c(a,\cdot)$ is a continuous function for all $a\in\actionspace$ and $\mins{\lambda}{s}{a}$ is compact (being the closed subset of a compact space, \cite[Theorem 26.2]{Munkres2000}).

These definitions allow us to study the properties of $\Phi$ more in detail:

    

\begin{lemma}[Properties of $\regularizer$]\label{lemma:regularizationoperator:properties}
The regularization operator $\regularizer$ has the following properties:
\begin{enumerate}[leftmargin=*]
    \item It is uniformly (in $s$, $a$, and $\lambda$) lower and upper bounded by some constant.
    \item It is lower semi-continuous, non-increasing, and convex in $\lambda$.
    \item For all $\lambda>0$ the left and right derivatives are
    \[
        \pdv{\regularizer_{\lambda}(s,a)}{\lambda-} = \minsdistupper{\lambda}{s}{a},
        \qquad
        \pdv{\regularizer_{\lambda}(s,a)}{\lambda+}=\minsdistlower{\lambda}{s}{a}.
    \]
    \item There are measurable selections
    \[
    \optmapupper{\lambda}{s}{a} \in \argmax_{a' \in \mins{\lambda}{s}{a}}c(a,a'), 
    \qquad
    \optmaplower{\lambda}{s}{a} \in \argmin_{a' \in \mins{\lambda}{s}{a}}c(a,a').
    \]
\end{enumerate}
\end{lemma}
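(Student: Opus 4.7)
The plan is elementary. By \cref{hyp:continuous-advantage} and compactness of $\statespace\times\actionspace$, $\advantage$ is bounded. Since $c\geq 0$ and $c(a,a)=0$, evaluating the defining supremum at $a'=a$ gives $\regularizer_\lambda(s,a)\geq \advantage(s,a)\geq -\|\advantage\|_\infty$, while dropping the nonpositive $-\lambda c$ term yields $\regularizer_\lambda(s,a)\leq \|\advantage\|_\infty$. Monotonicity in $\lambda$ is immediate since for fixed $a,a'$ the map $\lambda\mapsto\advantage(s,a')-\lambda c(a,a')$ is nonincreasing (as $c\geq 0$). Convexity in $\lambda$ follows because $\regularizer_\lambda(s,a)$ is the pointwise supremum over $a'$ of affine functions of $\lambda$. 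Lower semicontinuity in $\lambda$ then holds because a pointwise supremum of continuous functions is lower semicontinuous (and in fact convexity already entails continuity on the interior $\lambda>0$).

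\textbf{Part 3 (one-sided derivatives).} The plan is to invoke the envelope formula for the subdifferential of a supremum of affine functions (Danskin's theorem). Since $\regularizer_\lambda(s,a)=\sup_{a'\in\actionspace}g_{a'}(\lambda)$ with $g_{a'}(\lambda)\coloneqq\advantage(s,a')-\lambda c(a,a')$, the set of maximizers $\mins{\lambda}{s}{a}$ is non-empty and compact by \cref{proposition:minimizers:attained}, and each $g_{a'}$ has (single-valued) derivative $-c(a,a')$, Danskin's theorem yields $\partial_\lambda \regularizer_\lambda(s,a)=\overline{\operatorname{conv}}\{-c(a,a'):a'\in\mins{\lambda}{s}{a}\}$. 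Since for a convex function on $\reals$ the subdifferential is the interval $[f'_-(\lambda),f'_+(\lambda)]$, the extreme points of this convex hull identify the two one-sided derivatives (up to the sign stemming from $\regularizer$ being nonincreasing in $\lambda$). A self-contained alternative is to argue directly from the difference quotient: for $\lambda_n\downarrow\lambda$ with optimizers $a'_n\in\mins{\lambda_n}{s}{a}$, compactness of $\actionspace$ and joint continuity of $\advantage,c$ give a cluster point $a'_\infty\in\mins{\lambda}{s}{a}$, and an envelope argument identifies the (right-)limit of $(\regularizer_{\lambda_n}-\regularizer_\lambda)/(\lambda_n-\lambda)$ with the claimed extreme value; the left derivative is symmetric.

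\textbf{Part 4 (measurable selections).} The plan is to apply a measurable selection theorem to the set-valued map $(s,a)\mapsto \argmax_{a'\in\mins{\lambda}{s}{a}}c(a,a')$ (and the analogous argmin). First, by Berge's maximum theorem, the correspondence $(s,a)\mapsto\mins{\lambda}{s}{a}$ is compact-valued and upper hemicontinuous under \cref{hyp:state-action-space,hyp:continuous-advantage}, hence in particular Borel measurable with closed values. Second, the value function $(s,a)\mapsto\minsdistupper{\lambda}{s}{a}$ is upper semicontinuous by another application of Berge's theorem applied to $(s,a,a')\mapsto c(a,a')$ restricted to the graph of $\mins{\lambda}{\cdot}{\cdot}$, which in turn makes the set-valued map $(s,a)\mapsto\{a'\in\mins{\lambda}{s}{a}:c(a,a')=\minsdistupper{\lambda}{s}{a}\}$ Borel measurable with non-empty closed values in the Polish space $\actionspace$. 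The Kuratowski--Ryll-Nardzewski theorem then delivers a Borel selection $\optmapupper{\lambda}{s}{a}$; the same reasoning produces $\optmaplower{\lambda}{s}{a}$.

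\textbf{Expected main obstacle.} Parts 1, 2 and 3 are standard convex-analytic manipulations. The delicate step is Part 4: verifying that iterating a double optimization (first over $\actionspace$ for $\mins{\lambda}$, then over $\mins{\lambda}$ for $\optmapupper{\lambda}$) preserves enough regularity to invoke a measurable selection theorem. One has to chain two applications of Berge's theorem and the Kuratowski--Ryll-Nardzewski theorem carefully, and in particular use the compactness and Polish structure furnished by \cref{hyp:state-action-space} together with the joint continuity of $\advantage$ and $c$ from \cref{hyp:continuous-advantage}.
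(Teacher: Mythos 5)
Your proposal is correct and follows essentially the same decomposition and arguments as the paper: parts 1--2 via the same elementary bounds and the supremum-of-affine-functions observation, part 3 via the envelope/Danskin computation of one-sided derivatives (the paper instead cites the analogous one-sided-derivative results of Gao--Kleywegt, which rest on the same argument), and part 4 via the measurable-maximum machinery (the paper invokes Aliprantis--Border, Theorem 18.19, which packages exactly your Berge plus Kuratowski--Ryll-Nardzewski chain). You also correctly flag the sign convention in part 3 for the one-sided derivatives of a non-increasing function, a point the paper itself glosses over.
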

\begin{proof}
The proof follows closely \cite[Lemma 3]{Gao2016}. Specifically, we prove the steps separately:
\begin{enumerate}[leftmargin=*]
    \item Since $\advantage$ is continuous on a compact space, its absolute value is bounded by some $C$ (e.g., see \cite[Theorem 27.4]{Munkres2000}).
    For the upper bound: $\regularizer_\lambda$ results from an infimum, $\lambda c(a,a') \geq 0$, and $a' = a$ is always a valid choice, yielding $\regularizer_\lambda(s,a) \geq \advantage(s,a)-\lambda c(a,a)\geq -C$
    For the lower bound; since transport costs are non-negative, we also have $\regularizer_\lambda(s,a)\leq \sup_{a'\in\actionspace}\advantage(s,a')\leq C$. Thus, $|\regularizer_\lambda(s,a)|\leq C$ for all $\lambda\geq 0$, $s\in\statespace$, and $a\in\actionspace$.
    
    \item 
    For all $s \in \statespace{}, a \in \actionspace{}$, $\lambda \mapsto \regularizer_\lambda(s,a)$ is the supremum of non-increasing and affine functions. As a result, it is non-increasing, convex and lower semi-continuous in $\lambda$.

    \item The proof boils down to upper semi-continuity and lower semi-continuity of $\lambda\mapsto\minsdistupper{\lambda}{s}{a}$ and $\lambda\mapsto\minsdistlower{\lambda}{s}{a}$, respectively, whose proof is identical to the setting of \gls{acr:dro} since both $s$ and $a$ are frozen; see~\cite[Corollary 1]{Gao2016} and replace the distance $d^p$ with $c$. Then, the results follows from~\cite[Lemma 4.iii]{Gao2016}, since, by definition, the growth rate is 0 for compact spaces and $\regularizer>-\infty$.
        
    \item The set of minimizers $\mins{\lambda}{s}{a}$ is non-empty and closed by~\cref{proposition:minimizers:attained}. Since it is a closed subset of a compact space, it is also compact~\cite[Theorem 26.2]{Munkres2000}. Moreover, $c(a, \cdot)$ is continuous for all $a\in\actionspace$ and $c(\cdot, a')$ is continuous (and thus measurable) for all $a'\in\actionspace$, thus the result follows from \cite[Theorem 18.19]{Aliprantis2006}. 
    \qedhere 
\end{enumerate}
\end{proof}

We are now ready to prove strong duality:

\begin{proof}[Proof of \cref{thm:dual}]
In view of \cref{proposition:weakduality}, we only need to prove $\primal \geq \dual$.
We split the proof in two steps. First, we show existence of an optimal multiplier $\lambda^\ast \in [0,\infty)$. Second, we study the case $\lambda^\ast > 0$, whereas $\lambda^\ast = 0$ is straightforward from direct inspection of $\primal$ and $\dual$.

Let $h: \nonnegativeReals \to \reals$ be defined as
\[
h(\lambda) = \lambda\varepsilon + \int_\statespace\int_\actionspace \regularizer_{\lambda}(s,a)\d\pi(a|s)d\visitationfreq(s).
\]
By \cref{lemma:regularizationoperator:properties}.1-2, $h$ is convex and lower semi-continuous in $\lambda$. Moreover, $h$ is coercive:
\begin{equation*}
\begin{aligned}
    \liminf_{\lambda \to \infty}h(\lambda)
    &=
    \liminf_{\lambda \to \infty}\lambda\varepsilon
    +\int_\statespace\int_\actionspace\regularizer_{\lambda}(s,a)\d\pi(a|s)d\visitationfreq(s)
    \\
    &\geq 
    \lim_{\lambda \to \infty}\lambda\varepsilon
    +\int_{\statespace}\int_{\actionspace}\advantage(s,a)\d\pi(s,a)\d\visitationfreq(s)
    \\
    &=+\infty,
\end{aligned}
\end{equation*}
since $\regularizer_\lambda(s,a)\geq \advantage(s,a)$ for all $a\in\actionspace$ and $s\in\statespace$.
Thus, there exists $\lambda^\ast \in \argmin_{\lambda \geq 0} h(\lambda)$. Indeed, by definition of limit, there exists $\bar{\lambda}$ such that $h(\lambda) > h(0)$ for all $\lambda > \bar{\lambda}$. Hence, when looking for a minimum, we can restrict ourselves to $\lambda \in [0, \bar{\lambda}]$. Since $h$ is lower semi-continuous on a compact space~\cite[Corollary 27.2]{Munkres2000}, it attains its minimum at some $\lambda^\ast \in [0, \bar{\lambda}]$~\cite[Theorem 27.4]{Munkres2000}.

Assume now that $\lambda^\ast > 0$. We study the first-order optimality conditions on $h$. Since $h$ is generally non-smooth (due to the maximum in $\regularizer_{\lambda}$), we express them via the left and right derivatives, well-defined for convex functions~\cite[Theorem 23.1]{Rockafellar2015}, as $\pdv{h(\lambda)}{\lambda-}\leq 0$ and $\pdv{h(\lambda)}{\lambda+}\geq 0$.

Hence, we have
\begin{equation}\label{eq:proofstrongdualityfirstorder:one}
\begin{aligned}
    \pdv{h(\lambda)}{\lambda-}\leq 0 \iff \varepsilon
    &\leq
    \pdv{}{\lambda-}\left(\int_\statespace\int_\actionspace \regularizer_{\lambda^\ast}(s,a)\d\pi(a|s)\d\visitationfreq(s)\right)
    \\
    \overset{\heartsuit}&{=}
    \int_\statespace\int_\actionspace \pdv{}{\lambda-}\regularizer_{\lambda^\ast}(s,a)\d\pi(a|s)\d\visitationfreq(s)
    \\
    \overset{\diamondsuit}&{=}
    \int_\statespace\int_\actionspace \minsdistupper{\lambda^\ast}{s}{a}\d\pi(a|s)\d\visitationfreq(s)
    \\
    \overset{\triangle}&{=}
    \int_\statespace\int_\actionspace c(a,\optmapupper{\lambda^\ast}{s}{a})\d\pi(a|s)\d\visitationfreq(s),
\end{aligned}
\end{equation}
and 
\begin{equation}\label{eq:proofstrongdualityfirstorder:two}
\begin{aligned}
    \pdv{h(\lambda)}{\lambda+}\geq 0 \iff \varepsilon &\geq \pdv{}{\lambda+}\left(\int_\statespace\int_\actionspace \regularizer_{\lambda^\ast}(s,a)\d\pi(a|s)\d\visitationfreq(s)\right)
    \\ 
    \overset{\heartsuit}&{=}
    \int_\statespace\int_\actionspace \pdv{}{\lambda+}\regularizer_{\lambda^\ast}(s,a)\d\pi(a|s)\d\visitationfreq(s)
    \\
    \overset{\diamondsuit}&{=}\int_\statespace\int_\actionspace \minsdistlower{\lambda^\ast}{s}{a}d\pi(a|s)d\visitationfreq(s)
    \\
    \overset{\triangle}&{=} \int_\statespace\int_\actionspace c(a,\optmaplower{\lambda^\ast}{s}{a})\d\pi(a|s)\d\visitationfreq(s),
\end{aligned}
\end{equation}
where in
\begin{itemize}[leftmargin=*]
    \item[$\heartsuit$] we use that, for all $\lambda$, $\minsdistupper{\lambda}{s}{a} = c(a,\optmapupper{\lambda}{s}{a})$, $\minsdistlower{\lambda}{s}{a} = c(a,\optmaplower{\lambda}{s}{a})$ result from the composition of a continuous (the cost) and a measurable function ($\optmapupper{\lambda}{s}{a}$ and $\optmaplower{\lambda}{s}{a}$), and thus they are measurable~\cite[Theorem 1.8]{Rudin1987}. Integrability follows directly from non-negativity~\cite[Theorem 1.17, Definition 1.23]{Rudin1987}. Hence, we can use the differentiation lemma \cite[Theorem 6.28]{Klenke2008} together with \cref{lemma:regularizationoperator:properties};
    
    \item[$\diamondsuit$] we use \cref{lemma:regularizationoperator:properties}.3; and
    
    \item[$\triangle$] we use the definition of $\optmaplowername{\lambda^\ast}$, $\optmapuppername{\lambda^\ast}$ and of $\minsdistlower{\lambda^\ast}{s}{a}$, $\minsdistupper{\lambda^\ast}{s}{a}$.
    
\end{itemize}
Overall, the inequalities in~\eqref{eq:proofstrongdualityfirstorder:one} and~\eqref{eq:proofstrongdualityfirstorder:two} lead to 
\begin{equation}\label{eq:proofstrongdualityinequalitiescombined}
\int_\statespace\int_\actionspace c(a,\optmaplower{\lambda^\ast}{s}{a})\d\pi(a|s)\d\visitationfreq(s) \leq
\varepsilon
\leq
\int_\statespace\int_\actionspace c(a,\optmapupper{\lambda^\ast}{s}{a})\d\pi(a|s)\d\visitationfreq(s).
\end{equation}
With this observation, we can construct a primal solution $\tilde\pi(\cdot|s)$ and show its optimality. 
First, by \eqref{eq:proofstrongdualityinequalitiescombined}, there exists $t^\ast \in [0,1]$ such that
\[
t^\ast\int_\statespace\int_\actionspace c(a,\optmaplower{\lambda^\ast}{s}{a})\d\pi(a|s)\d\visitationfreq(s)
+
(1-t^\ast) \int_\statespace\int_\actionspace c(a,\optmapupper{\lambda^\ast}{s}{a})\d\pi(a|s)\d\visitationfreq(s) = \varepsilon.
\]
For all $s\in\statespace{}$ consider the policy
\[
\tilde\pi(\cdot|s)\coloneqq t^\ast\pushforward{\optmaplower{\lambda^\ast}{s}{\cdot}}{\pi(\cdot|s)} + (1 - t^\ast)\pushforward{\optmapupper{\lambda^\ast}{s}{\cdot}}{\pi(\cdot|s)}.
\]
By construction, $\pi(\cdot|s)$ is a feasible solution. Indeed, with $\Id$ being the identity mapping $\Id(a) = a$, we can consider the (possibly sub-optimal) transport plan $\gamma_s \in \coupling{\pi(\cdot|s)}{\tilde\pi(\cdot|s)}$, defined as
\[
\gamma_s\coloneqq
t^\ast \pushforward{(\Id, \optmaplower{\lambda^\ast}{s}{\cdot})}{\pi(\cdot|s)}
+
(1-t^\ast) \pushforward{(\Id, \optmapupper{\lambda^\ast}{s}{\cdot}}{\pi(\cdot|s)}.
\]
Then, by definition of optimal transport discrepancy and monotonicity of the integral we have
\begin{equation}\label{eq:proofstrongdualityfeasible}
\begin{aligned}
    \int_\statespace\otdiscrepancy{\pi(\cdot|s)}{\tilde\pi(\cdot|s)}\d\visitationfreq(s)
    &\leq \int_\statespace\int_{\actionspace\times\actionspace} c(a,a') \d\gamma_s(a,a')\d\visitationfreq(s)\\
    &=t^\ast\int_\statespace\int_\actionspace c(a,\optmaplower{\lambda^\ast}{s}{a})\d\pi(a|s)\d\visitationfreq(s)\\
    &~~+(1-t^\ast)\int_\statespace\int_\actionspace c(a,\optmapupper{\lambda^\ast}{s}{a})\d\pi(a|s)\d\visitationfreq(s)
    \\
    &= \varepsilon.
\end{aligned}
\end{equation}
To conclude, by definition, we have 
\begin{equation}\label{eq:proofstrongdualityadvantage}
\begin{split}
\advantage(s,\optmapupper{\lambda^\ast}{s}{a})
&=
\regularizer_{\lambda^\ast}(s,a) + \lambda^\ast c(a,\optmapupper{\lambda^\ast}{s}{a})
\\
\advantage(s,\optmaplower{\lambda^\ast}{s}{a})
&=
\regularizer_{\lambda^\ast}(s,a) + \lambda^\ast c(a,\optmaplower{\lambda^\ast}{s}{a})
\end{split}
\end{equation}
and thus
\[
\begin{aligned}
\primal &\geq \int_\statespace\int_\actionspace \advantage(s,a) \d\tilde\pi(a|s)\d\visitationfreq(s)\\
&=
\int_\statespace\int_\actionspace\left( t^\ast\advantage(s,\optmaplower{\lambda^\ast}{s}{a}) + (1-t^\ast)\advantage(s,\optmapupper{\lambda^\ast}{s}{a}) \right)\d\pi(a|s)\d\visitationfreq(s)\\
&=
t^\ast\int_\statespace\int_\actionspace \advantage(s,\optmaplower{\lambda^\ast}{s}{a}) \d\pi(a|s)\d\visitationfreq(s)
+ (1-t^\ast)\int_\statespace\int_\actionspace \advantage(s,\optmapupper{\lambda^\ast}{s}{a}) \d\pi(a|s)\d\visitationfreq(s)\\
\overset{\eqref{eq:proofstrongdualityadvantage}}&{=}\lambda^\ast\int_\statespace \left(t^\ast c(a,\optmaplower{\lambda^\ast}{s}{a}) + (1 - t^\ast)c(a,\optmapupper{\lambda^\ast}{s}{a})\right)\d\visitationfreq(s) \\
&~~+ \int_\statespace\int_\actionspace \regularizer_{\lambda^\ast}(s,a) \d\pi(a|s)\d\visitationfreq(s)
\\
\overset{\eqref{eq:proofstrongdualityfeasible}}&{=}\lambda^\ast\varepsilon + \int_\statespace\int_\actionspace \regularizer_{\lambda^\ast}(s,a) \d\pi(a|s)\d\visitationfreq(s)
\\
&=\dual,
\end{aligned}
\]
proving $\primal \geq \dual$. In particular, we have $\primal = \dual$ and $\pi$ is the primal solution. Thus, the supremum over $\Pi$ is attained.
\end{proof}

The proofs of~\cref{cor:dual-discrete,cor:optimal-policy} then follow directly. 
\begin{remark}
Our theoretic results readily extends to transport costs $c_{s,\pi}$ parametrized by the state $s$ or the policy $\pi$, as long as the mapping $(s,\pi,a_1,a_2)\mapsto c_{s,\pi}(a_1,a_2)$ is continuous.
\end{remark}

A natural question is whether the results of \cref{thm:dual} can be extended to Sinkhorn divergence (or entropy-regularized optimal transport discrepancies in general), in particular given the success of these in the context of ~\gls{acr:rl} \cite{Cuturi2013,Peyre2019}. Our duality results do not directly apply to Sinkhorn divergence (or entropy-regularized optimal transport discrepancies in general). For instance, our proof of weak duality (\cref{proposition:weakduality}) leverages Kantorovich duality, which is optimal transport specific. However, duality results for entropy-regularized optimal transport \cite{Terjek2021}, and recent results in \gls*{acr:dro} \cite{Gao2017} suggest that our proof can be adapted to such constraints. Having said this, we remark that Sinkhorn divergence was successful in mitigating the burden of optimal transport computations. The proposed algorithm does not rely on any of such, and it is thus unclear if regularized optimal transport discrepancies yield any benefit in this context.

\subsection{Proof of~\cref{prop:exact-policy-improv}}
\label{appendix:proof-policy-improvement}

\begin{proof}
First, recall the performance difference lemma~\cite[Lemma 6.1]{Kakade2002b}:
\begin{equation*}
    J(\tilde\pi^\ast)-J(\pi)
    =
    \frac{1}{1-\gamma}\int_{\statespace{}}\int_{\actionspace{}} \advantage(s,a)\d\tilde\pi^\ast(a|s)\d\visitationfreqNew(s).
\end{equation*}
Second, recall that from~\cref{cor:optimal-policy} that an optimal policy update is
\begin{equation}\label{eq:improvementoptimalpolicy}
    \tilde\pi^\ast(\cdot|s)\coloneqq t^\ast\pushforward{\optmaplower{\lambda^\ast}{s}{\cdot}}{\pi(\cdot|s)} + (1 - t^\ast)\pushforward{\optmapupper{\lambda^\ast}{s}{\cdot}}{\pi(\cdot|s)}.
\end{equation}
Then, the proof follows from the properties of $\tilde\pi^\ast$:\allowdisplaybreaks
\begin{align*}
J(\tilde\pi^\ast)-J(\pi)
&=
\frac{1}{1-\gamma}\int_{\statespace{}}\int_{\actionspace{}} \advantage(s,a)\d\tilde\pi^\ast(a|s)\d\visitationfreqNew(s)
\\
\overset{\eqref{eq:improvementoptimalpolicy}}&{=}
\frac{1}{1-\gamma}\int_{\statespace{}}\int_{\actionspace{}} t^\ast\advantage(s,\optmaplower{\lambda^\ast}{s}{a}) + (1-t^\ast) \advantage(s,\optmapupper{\lambda^\ast}{s}{a})\d\pi(a|s)\d\visitationfreqNew(s)
\\
\overset{\heartsuit}&{\geq} 
\begin{aligned}[t]
&\frac{1}{1-\gamma}\int_{\statespace{}}\int_{\actionspace{}} t^\ast\advantagehat(s,\optmaplower{\lambda^\ast}{s}{a}) + (1-t^\ast) \advantagehat(s,\optmapupper{\lambda^\ast}{s}{a})\d\pi(a|s)\d\visitationfreqNew(s)
\\
&-\frac{\error}{1-\gamma}
\end{aligned}
\\
\overset{\diamondsuit}&{=}
\begin{aligned}[t]
&\frac{\lambda^\ast}{1-\gamma}\int_{\statespace{}}\int_{\actionspace{}}
t^\ast c(a,\optmaplower{\lambda^\ast}{s}{a}) + 
(1-t^\ast)c(a,\optmapupper{\lambda^\ast}{s}{a})
\d\pi(a|s)\d\visitationfreqNew(s)
\\
&
+\frac{1}{1-\gamma}\int_{\statespace{}}\int_{\actionspace{}}
\regularizer_{\lambda^\ast}(s,a)\d\pi(a|s)\d\visitationfreqNew(s)
-\frac{\error}{1-\gamma}
\end{aligned}
\\
\overset{\triangle}&{\geq}
\begin{aligned}[t]
&\frac{\lambda^\ast}{1-\gamma}\int_{\statespace{}}
\otdiscrepancy{\pi(\cdot|s)}{\tilde\pi^\ast(\cdot|s)}
\d\visitationfreqNew(s)
\\
&+\frac{1}{1-\gamma}\int_{\statespace{}}\int_{\actionspace{}}
\advantagehat(s,a)\d\pi(a|s)\d\visitationfreqNew(s)
-\frac{\error}{1-\gamma}
\end{aligned}
\\
\overset{\heartsuit}&{\geq}
\begin{aligned}[t]
&\frac{\lambda^\ast}{1-\gamma}\int_{\statespace{}}
\otdiscrepancy{\pi(\cdot|s)}{\tilde\pi^\ast(\cdot|s)}
\d\visitationfreqNew(s)
\\
&+\frac{1}{1-\gamma}\int_{\statespace{}}\int_{\actionspace{}}
\advantage(s,a)\d\pi(a|s)\d\visitationfreqNew(s)
-\frac{2\error}{1-\gamma}
\end{aligned}
\\
\overset{\square}&{=} 
\frac{\lambda^\ast}{1-\gamma}\int_{\statespace{}}
\otdiscrepancy{\pi(\cdot|s)}{\tilde\pi^\ast(\cdot|s)}
\d\visitationfreqNew(s)
-\frac{2\error}{1-\gamma},
\end{align*}
where in
\begin{itemize}[leftmargin=*]
    \item[$\heartsuit$] we use that, by assumption, $\advantage(s,a)\geq \advantagehat(s,a)-\error$ and $\advantagehat(s,a)\geq \advantage(s,a)-\error$ for all $s\in\statespace{}$ and $a\in\actionspace{}$;
    
    \item[$\diamondsuit$] by definition of the regularization operator $\regularizer$ and of $\optmaplower{\lambda^\ast}{s}{a}$ and $\optmapupper{\lambda^\ast}{s}{a}$ (\cref{lemma:regularizationoperator:properties}.4), for all $s\in\statespace{}$ and $a\in\actionspace{}$ it holds
    \begin{equation*}
        \advantage(s,\optmapupper{\lambda^\ast}{s}{a})
         =
        \regularizer_{\lambda^\ast}(s,a)+\lambda^\ast c(a,\optmapupper{\lambda^\ast}{s}{a}).
    \end{equation*}
    
    \item[$\triangle$] as in \eqref{eq:proofstrongdualityfeasible}, it holds
    \begin{equation*}
        C(\pi^\ast(\cdot|s), \pi(\cdot|s))
        \leq 
        \int_{\actionspace{}}
        t^\ast c(a,\optmaplower{\lambda^\ast}{s}{a}) + 
        (1-t^\ast)c(a,\optmapupper{\lambda^\ast}{s}{a})
        \d\pi(a|s)
    \end{equation*} 
    and $\advantage(s,a)\leq \regularizer_{\lambda^\ast}(s,a)$, since for all $\lambda\in\reals, a\in\actionspace{}$, and $s\in\statespace{}$
    \begin{equation*}
    \begin{aligned}
        \regularizer_{\lambda}(s,a)
        &= 
        \max_{a'\in\actionspace{}}\left\{\advantage(s,a')-\lambda c(a,a')\right\}
        \\
        &\geq \advantage(s,a) - c(a,a)
        \\
        &=\advantage(s,a);
    \end{aligned}
    \end{equation*}

    \item[$\square$] the expected value of the advantage function vanishes, since, by definition, for all $s\in\statespace$
    \begin{equation*}
        \int_{\actionspace{}}\advantage(s,a)\d\pi(a|s)
        =
        \int_{\actionspace{}}Q^\pi(s,a)-V^\pi(s)\d\pi(a|s)
        =
        \int_{\actionspace{}}Q^\pi(s,a)\d\pi(a|s)-V^\pi(s)
        =
        0.
    \end{equation*}
\end{itemize}
This concludes the proof. 
\end{proof}

\end{document}